\def\eqref#1{equation~\ref{#1}}
\def\1{\bm{1}}
\DeclareMathAlphabet{\mathsfit}{\encodingdefault}{\sfdefault}{m}{sl}
\SetMathAlphabet{\mathsfit}{bold}{\encodingdefault}{\sfdefault}{bx}{n}
\newcommand{\E}{\mathbb{E}}
\newcommand{\Var}{\mathrm{Var}}
\newcommand{\Cov}{\mathrm{Cov}}
\newcommand{\bs}{\mathbf{s}}
\definecolor{impr}{RGB}{34, 139, 34}
\definecolor{lightred}{RGB}{255, 230, 230}
\definecolor{darkred}{RGB}{192, 0, 0}
\definecolor{bestbg}{HTML}{FFF2B2}   %
\definecolor{secondbg}{HTML}{E8F0FE} %
\definecolor{LavenderLight}{HTML}{C7C3F5}
\newcommand{\best}[1]{\cellcolor{bestbg}\textbf{#1}}
\newcommand{\sbest}[1]{\cellcolor{secondbg}\underline{#1}}
\newcommand{\highlightred}[1]{%
  \colorbox{lightred}{\parbox[t]{\linewidth}{\strut #1 \strut}}%
}
\definecolor{darkgreen}{rgb}{0.0, 0.5, 0.0} 
\theoremstyle{plain}
\newtheorem{theorem}{Theorem}[section]
\newtheorem{proposition}[theorem]{Proposition}
\newtheorem{lemma}[theorem]{Lemma}
\theoremstyle{definition}
\theoremstyle{remark}
\newtheorem{remark}[theorem]{Remark}
\definecolor{LightBlue}{RGB}{173,216,230}
\newcommand{\our}[1]{\textsc{TaTToo}\xspace}
\newcommand{\circnum}[1]{%
  \tikz[baseline=(char.base)]{
    \node[shape=circle,fill=blue!30!white,inner sep=.6pt] (char)
    {\textcolor{white}{#1}};}
}
\title{\our{}: Tool-Grounded Thinking PRM\\ for Test-Time Scaling in Tabular Reasoning}
\author{Jiaru Zou\textsuperscript{1,2}\thanks{Contact: \href{mailto:jiaruz2@illinois.edu}{jiaruz2@illinois.edu}} , Soumya Roy\textsuperscript{2}, Vinay Kumar Verma\textsuperscript{2}, Ziyi Wang\textsuperscript{3}, David Wipf\textsuperscript{2},\\ \textbf{Pan Lu\textsuperscript{4}}, \textbf{Sumit Negi\textsuperscript{2}}, \textbf{James Zou\textsuperscript{4}}, \textbf{Jingrui He\textsuperscript{1}}\\ 
$^1$UIUC, $^2$Amazon, $^3$Purdue University, $^4$Stanford University \\
}
\begin{document}

\maketitle

\vspace{-10pt}
\begin{abstract}

Process Reward Models (PRMs) have recently emerged as a powerful framework for enhancing the reasoning capabilities of large reasoning models (LRMs), particularly in the context of test-time scaling (TTS).
However, their potential for supervising LRMs on tabular reasoning domains remains underexplored. 
Through detailed empirical analyses, we identify that existing PRMs, though widely adopted for supervising text-only reasoning steps, struggle with table-specific operations such as sub-table retrieval and schema interaction, leading to critical performance bottlenecks.
To address this limitation, we propose \textit{\our{}}, a novel table-grounded PRM framework that (i) reasons explicitly over tabular reasoning steps and (ii) integrates tool-based verification to provide precise reward supervision. 
Concretely, we first design a scalable data curation pipeline that constructs over 60k high-quality step-level annotations by integrating table verification rationales with tool-based executions. 
Building on the collected data, we train \our{} with a dual-stage paradigm: cold-start supervised fine-tuning to capture tool-use reasoning patterns, followed by reinforcement learning with tool-grounded reward shaping to align our model with table-based verification.
We provide a comprehensive evaluation of the policy improvement induced by our newly designed PRM.
Across 5 challenging tabular reasoning benchmarks covering numerical reasoning, fact-checking, and data analysis, \our{} improves downstream policy LRMs by 30.9\% at inference, surpasses strong PRM baselines such as Qwen-2.5-Math-PRM-72B with only 8B parameters, and demonstrates strong generalizability across diverse TTS strategies.

\end{abstract}

\addtocontents{toc}{\protect\setcounter{tocdepth}{-1}}
\vspace{-2pt}
\section{Introduction}
\vspace{-2pt}

Tabular reasoning has become a fundamental capability for emerging large reasoning models (LRMs) across various real-world applications, including numerical analysis \citep{akhtar2023exploring, table_meet_llm}, fact-checking \citep{chen2019tabfact, parikh2020totto}, and question answering \citep{vakulenko2017tableqa, li2023table}.
Unlike free-form text, tables encode information in rows and columns with an implicit relational semi-structure.
Effective reasoning over tables therefore requires both accurate interpretation of tabular content and step-by-step logical inference to produce precise answers~\citep{chain_of_table, zhang2025survey}. 
To support such multi-step reasoning, recent studies such as Table-R1 series \citep{wu2025table, yang2025tabler1inferencetimescalingtable, jin2025table} have incorporated reinforcement learning (RL) techniques \citep{ppo,grpo} to better align LRMs with the demands of complex table understanding and reasoning.

On the other hand, process reward models (PRMs)~\citep{setlur2024rewarding,wang2024mathshepherdverifyreinforcellms,qwen2.5mathprm72b} have been developed to provide step-level supervision over model reasoning trajectories during test-time scaling (TTS), offering fine-grained verification that enhances LRMs’ performance at inference. However, despite growing computational budgets and increasing emphasis on advancing LRMs’ tabular reasoning abilities~\citep{ye2025limo,muennighoff2025s1simpletesttimescaling}, a corresponding step-level PRM to supervise the reasoning quality of these models in table domains is equally important but remains notably absent.
This gap motivates our study of a fundamental question:
\begin{tcolorbox}[
    enhanced,
    colback=yellow!10!white,      %
    colframe=black,            %
    coltitle=white,            %
    fonttitle=\bfseries,       %
    boxrule=.7pt,
    width=\textwidth,
    top=1mm,
    bottom=1mm,
    left=1mm,                  %
    right=1mm,                 %
    before skip=6pt, after skip=6pt,
    attach boxed title to top left={
        yshift=-2mm,
        xshift=2mm
    },
    boxed title style={
        colback=black,
        sharp corners,
        boxrule=0pt,
        top=2pt, bottom=2pt, left=4pt, right=4pt
    }，
]
\begin{center}
    \textbf{\raisebox{-0.2\height}{\includegraphics[height=1.3em]{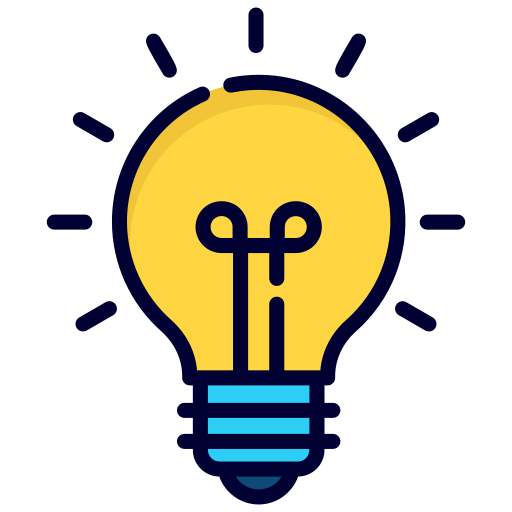}}%
    \ \textit{How can we provide reliable step-level supervision to advanced LRMs in tabular reasoning?}}
\end{center}

\end{tcolorbox}

To investigate this question, we first revisit several general-domain advanced PRMs and evaluate their effectiveness in supervising table-involved reasoning steps generated by LRMs.
Our analysis reveals that existing PRMs struggle to reliably verify two critical types of tabular CoT steps:
\textit{\ding{172} Table Retrieval}, where PRMs fail to supervise whether LRMs extract the correct sub-region of the input table relevant to the query; and \textit{\ding{173} Schema Interaction}, where PRMs cannot detect attention collapse \citep{dong2021attention}, as LRMs often overlook long-range table dependencies due to inherent locality bias. 
Beyond challenges arising from the tabular input modality, we also observe that current PRMs frequently introduce supervision errors within their own evaluation process, stemming from inaccurate table lookups or failed operations on tables. These shortcomings amplify bias and noise during TTS, ultimately creating persistent performance bottlenecks.

Motivated by our preliminary analyses, we propose \textbf{\our{}}, a new \textbf{\underline{Ta}}ble \textbf{\underline{T}}hinking PRM with \textbf{\underline{Too}}l integration abilities to provide more reliable and precise supervision for tabular reasoning.
Distinct from prior PRMs that provide weak supervision over table-specific operations, \our{} provides step-level supervision tailored to different input steps, applying both table-grounded rewards for tabular operation steps and inner-reasoning rewards for text-based reasoning steps. 
In addition, \our{} can leverage several external tools to interact with table contents, execute code-based operations, and incorporate the results back into the step-by-step verification process.
To build \our{}, we first design a scalable data curation pipeline that yields over 60k high-quality supervision instances by integrating expert verification rationales with tool-based executions. We then train our PRM under a dual-stage paradigm: supervised fine-tuning to capture step-level tool-use reasoning patterns, followed by reinforcement learning with a newly designed reward shaping scheme to encourage effective tool manipulation and faithful reasoning for accurate verification. Finally, we provide theoretical intuition on the policy improvement induced by incorporating \our{} during inference.

To demonstrate the effectiveness of \our{}, we conduct extensive experiments on five challenging tabular reasoning benchmarks, covering table-based question answering, numerical reasoning, fact-checking, and data analysis.
Across all benchmarks, incorporating 8B-size \our{} improves downstream policy models by 30.9\%.
In addition, \our{} consistently outperforms strong PRM baselines such as Qwen-2.5-Math-PRM-72B \citep{zhang2025lessons} and GenPRM-32B \citep{zhao2025genprmscalingtesttimecompute} with up to 9x parameter efficiency. 
In-depth analyses further demonstrate that incorporating our dual-stage training paradigm yields a 10.2\% improvement over standard PRM training, and \our{} exhibits strong generalizability across diverse TTS strategies, including Beam Search and DVTS.

\vspace{-5pt}
\section{Preliminary}
\label{sec:preliminary}
\vspace{-5pt}

\textbf{Table Understanding with LRMs.} 
We denote $T = (H, R)$ as a semi-structured table, where $H$ is the set of column headers defining the schema-level semantics, and $R$ is the set of rows, with each row composed of cell entries aligned with $H$.
Given a table $T$ and an associated natural language query $q$, we define a reasoning model as a conditional generation policy
$
\pi(\tau \mid T, q),
$
where $\tau = \{a_1, \dots, a_L\}$. Here, $\tau$ denotes the reasoning model's generated reasoning trajectory, including both intermediate reasoning steps $\{a_i\}^{L-1}_{i=1}$ and the final answer $a_L$.
In our problem setup, the intermediate reasoning steps consist of both model inner-thinking reasoning traces and tool-integrated programs that operate directly on the table to retrieve or compute intermediate results.
The final answer can take different formats depending on the query type, including textual or numerical values, boolean outputs (e.g., True/False), or executable programs (e.g., Python, SQL).

\textbf{Reward Modeling for Tabular Reasoning.}
Given a table $T$, a query $q$, and a candidate response $\tau$ generated by a policy LRM, a standard step-level verifier (i.e., PRM) parameterized by $\theta$ computes a scoring function $\mathcal{R}_\theta(\cdot)$ that assigns step-level rewards $r_i$ evaluating the correctness of each step $a_i \in \tau$. 
The trajectory-level reward $r_\tau$ for each response $\tau$ is then obtained by aggregating these step-level rewards. Formally, we have:
\begin{equation}
\label{eqa:prm_reward}
r_i = \mathcal{R}_\theta(a_i \mid T, q, \tau_{<i}), \quad \text{with } r_\tau = \mathcal{A} (r_1, r_2, \cdots, r_L),
\end{equation}
where $\mathcal{A}(\cdot)$ denotes an aggregation function such as \textsc{Mean} and \textsc{Sum} \citep{liu2025can}. The rewards provided by the PRM $\mathcal{R}_\theta$ can be further leveraged by a test-time compute strategy $\phi$ (e.g., Best-of-N \citep{brown2024largelanguagemonkeysscaling}, Beam Search \citep{snell2024scaling}) to guide resampling, refinement, and candidate selection among the responses generated by the policy model.

\vspace{-5pt}
\section{Why Table Reasoning Requires Verifiers Beyond Current PRMs?}
\label{sec:analyses}
\vspace{-5pt}

\begin{figure}[!t]
    \vspace{-5pt}
    \centering
    \begin{minipage}{0.7\textwidth}
        \centering
        \includegraphics[width=\linewidth]{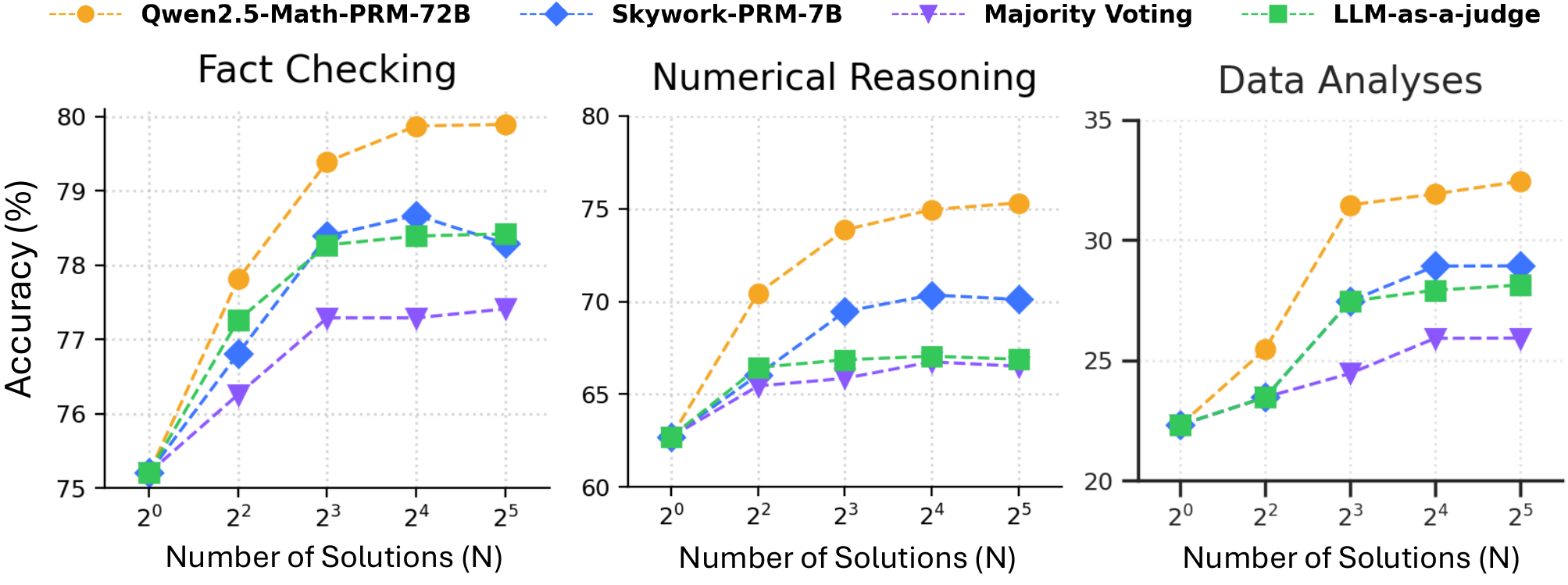}
        \caption{Best-of-N performance of DeepSeek-R1-Distill-Qwen-14B across 3 table tasks on TableBench with different types of step verifiers.
        }
        \label{fig:error_1}
    \end{minipage}
    \hfill
    \begin{minipage}{0.27\textwidth}
        \centering
        \includegraphics[width=\linewidth]{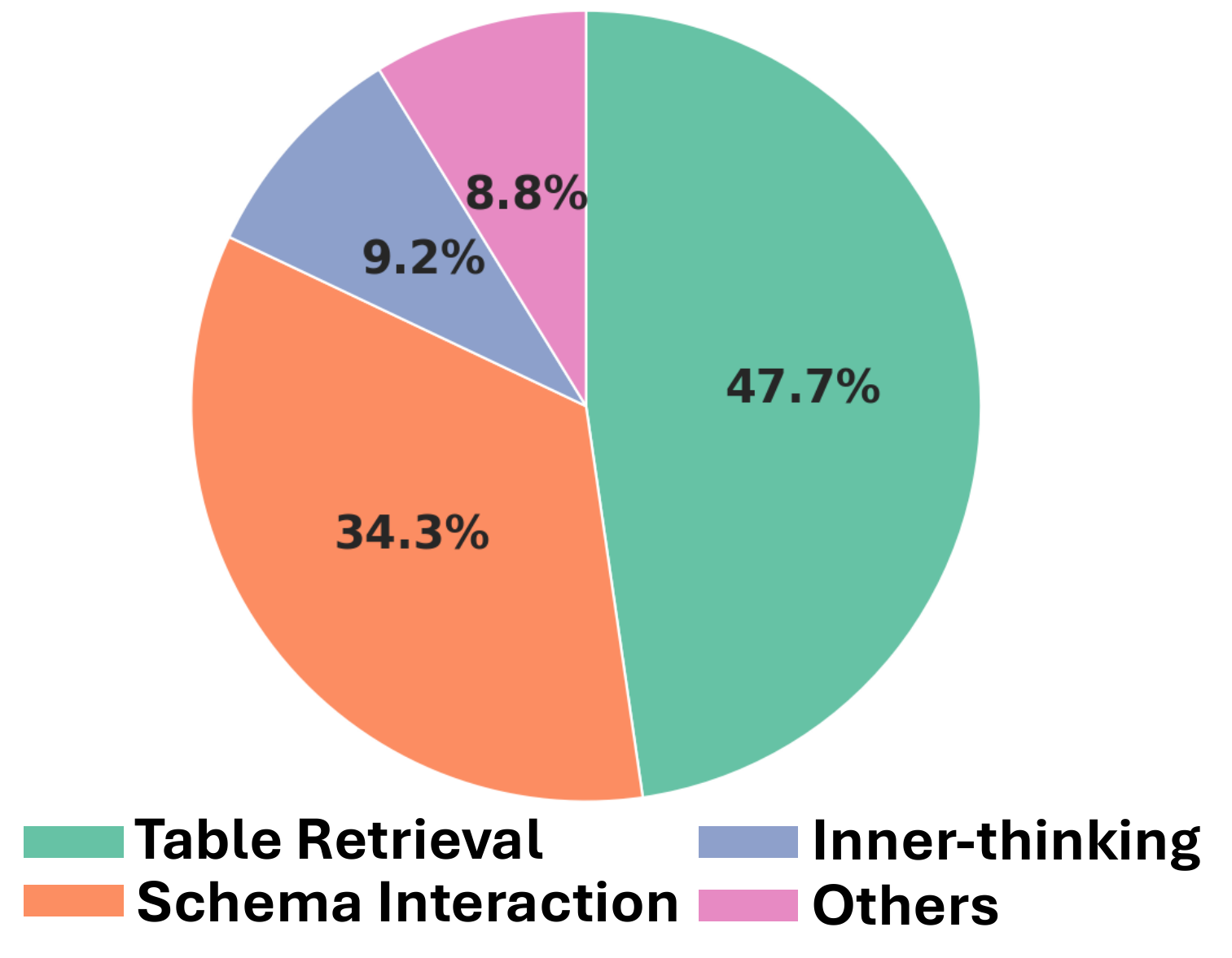}
        \caption{Error Distribution over 4 step categories across 500 incorrect cases after Best-of-N selection.}
        \label{fig:error_2}
    \end{minipage}
    \vspace{-10pt}
\end{figure}

We begin by revisiting existing general-domain PRM methods to assess their effectiveness in supervising LRMs on tabular reasoning tasks and to identify potential performance bottlenecks. To this end, we conduct a pilot study guided by two key questions:

\begin{tcolorbox}[
    enhanced,
    colback=yellow!10!white,      %
    colframe=black,            %
    coltitle=white,            %
    fonttitle=\bfseries,       %
    boxrule=.7pt,
    width=\textwidth,
    top=1mm,
    bottom=1mm,
    left=2mm,                  %
    right=2mm,                 %
    before skip=6pt, after skip=6pt,
    attach boxed title to top left={
        yshift=-2mm,
        xshift=2mm
    },
    boxed title style={
        colback=black,
        sharp corners,
        boxrule=0pt,
        top=2pt, bottom=2pt, left=4pt, right=4pt
    }，
]
\textit{\textbf{RQ1 -}} Beyond free-form text inputs, can common general-domain PRMs combined with TTS strategies also enhance the performance of LRMs on tabular reasoning tasks?

\vspace{3pt}

\textit{\textbf{RQ2 -}} When step-level reward supervision is crucial for tabular reasoning performance, how can PRMs effectively supervise and guide the quality of each reasoning step generated by LRMs?

\end{tcolorbox}

For brevity, we defer detailed experimental setups to Appendix \ref{app:set_up}. To investigate RQ1, we evaluate various step-level verification methods, including two advanced PRMs (Qwen2.5-Math-PRM-72B \citep{zhang2025lessons} and Skywork-PRM-7B \citep{skyworkopeno12024}), majority voting \citep{liu2025can}, and LLM-as-a-judge \citep{zheng2023judging} with the Best-of-N TTS strategy. We choose DeepSeek-R1-Distill-Qwen-14B \citep{deepseek_r1} as the common LRM and evaluate on TableBench \citep{tablebench}, which includes three fundamental table tasks (Fact Checking, Numerical Reasoning, and Data Analysis).
As shown in Figure~\ref{fig:error_1}, we observe that for small values of $N$, incorporating step-level verifiers into Best-of-$N$ generally improves LRM's performance over single-shot generation, with PRMs providing the largest gains.
However, once the number surpasses a threshold ($N \geq 8$), accuracy across all three table tasks converges to a bottleneck. For example, the performance of Qwen2.5-Math-PRM-72B on fact-checking is 79.19\%, 79.82\%, and 79.84\% for $N=\{8,16,32\}$, indicating that further increases in $N$ yield negligible gains, even though with PRM incorporation.

\begin{tcolorbox}[
    enhanced,
    colback=blue!5!white,      %
    colframe=black,            %
    coltitle=white,            %
    fonttitle=\bfseries,       %
    boxrule=0.5pt,
    width=\textwidth,
    top=1mm,
    bottom=1mm,
    left=2mm,                  %
    right=2mm,                 %
    before skip=6pt, after skip=6pt,
    attach boxed title to top left={
        yshift=-2mm,
        xshift=2mm
    },
    boxed title style={
        colback=black,
        sharp corners,
        boxrule=0pt,
        top=2pt, bottom=2pt, left=4pt, right=4pt
    }
]
\textbf{Observation 1 (Limitation on TTS):}  Existing PRMs yield modest improvements on tabular reasoning, but their efficacy quickly saturates, failing to fully exploit additional test-time compute.
\end{tcolorbox}

\begin{table}[!t]
    \centering
    \small
    \caption{Representative error cases in 3 different reasoning step categories. Each example highlights the erroneous model step in \textcolor{darkred}{red}, the corresponding error description, and the PRM’s (mis)judgment reward, illustrating where existing PRMs fail to detect mistakes.}
    \label{tab:error_example}
    \resizebox{\linewidth}{!}{%
    \begin{tabular}{l|>{\small}p{0.6\linewidth}|>{\small}p{0.5\linewidth}}
    \specialrule{1.2pt}{0pt}{0pt} 
    \textbf{Category} & \textbf{Wrong Model Answer} & \textbf{Error Description \& PRM output}\\
    \midrule
    \multirow{5}{*}{Table Retrieval} & 
    <think> Step 1: I need to figure out the average prominence of mountain peaks in the Democratic Republic of the Congo that have an elevation of at least 3000 meters. \newline \highlightred{Step 2: First, I'll review the provided table. My main focus is on the `country' and `elevation (m)' columns. \textcolor{darkred}{\textit{(Wrong Step)}}} \dots </think> \newline Final Answer is \texttt{\textbackslash boxed\{2306.65\}}
    & The model initially retrieves the wrong part of the table (columns \textit{`Country'} and \textit{`Elevation (m)'}), which leads to errors in all subsequent reasoning steps.
    \newline \newline \newline \newline
    \textbf{PRM $\rightarrow$ \{correct}\} \textcolor{red}{\ding{55} Fail to detect the error}
    \\
    \midrule
    \multirow{5}{*}{Schema Interaction} & 
    <think> Step 1: Let me start by looking at the table provided \dots The retrieved columns are \dots \newline \highlightred{Step 4: I'll go through each row and note down the episodes: Episode 1: Rating - 5.1, Episode 2: Rating - 5.2 \dots \textcolor{darkred}{\textit{(Wrong Step)}}} Step 5: Adding the rating together is 5.1 + 5.2 + \dots </think> \newline Final Answer is: \texttt{\textbackslash boxed\{11.52\}} 
    & The model identifies the correct column for the answer, but fails to extract all elements, omitting the last cell. This omission leads to incorrect reasoning in all subsequent steps.
    \newline \newline
    \textbf{PRM $\rightarrow$ \{correct}\} \textcolor{red}{\ding{55} Fail to detect the error}
    \\
    
    \midrule
    \multirow{5}{*}{Inner-thinking} & 
    <think> Step 1: Based on the table contents, I need to count podiums per season. 2008: $(1+1+1)/3 = 1$, \; 2009: $6$, \; 2010: $0$, \; 2011: $(2+0)/2 = 1$ \dots \newline \highlightred{Step 4: Then I will average across seasons. Final average is $(1+6+0+1)/4 = 2$  \textcolor{darkred}{\textit{(Wrong Step)}}} </think>  \newline Final Answer is: \texttt{\textbackslash boxed\{2\}}
    & The model incorrectly does the calculation by averaging the season-level means, giving each season equal weight, instead of averaging across all team-seasons. \newline \newline \newline \newline
    \textbf{PRM $\rightarrow$ \{incorrect}\} \textcolor{impr}{\ding{51} Detect the error}
    \\
    
    \specialrule{1.2pt}{0pt}{0pt} 
    \end{tabular}
    }
    \vspace{-5pt}
\end{table}

\textbf{Error Analysis.} Building on the observation, we further investigate the underlying causes of the performance bottleneck by conducting an error analysis on LRM's generation and PRM's supervision processes. 
Specifically, we sample 500 erroneous Best-of-N responses (N$=32$) selected by the PRM from LRM outputs, and ask human experts to classify them into 13 well-defined tabular error types (see Appendix~\ref{app:error_analysis}).
We then connect these errors with 4 reasoning-step categories reflecting the typical flow of LRMs’ reasoning process:
(i) \textit{Table Retrieval Steps}, locating relevant rows/columns regarding the input query; 
(ii) \textit{Schema Interaction Steps}, reasoning over the retrieved table contents, 
(iii) \textit{Inner-thinking Steps}, models' inner reasoning independent of table contents, 
and (iv) \textit{Others}, initial setup or final output steps that are irrelevant to core reasoning process.
Figure~\ref{fig:error_2} presents the error distribution across 4 reasoning step categories. We find that most errors arise in \textit{Table Retrieval} (47.7\%) and \textit{Schema Interaction} (34.3\%), implying that PRMs perform reasonably well on independent reasoning but fall short when reasoning steps involve table-specific operations. For better demonstration, we provide representative examples for each category in Table~\ref{tab:error_example}. 

\textbf{Why do PRMs fail on table-involved reasoning steps?} 
Next, we take a closer look at why PRMs lose their supervisory effectiveness when reasoning steps involve table operations.
For \textit{Table Retrieval Steps}, we conduct a contrastive experiment focusing particularly on the table contents retrieved by LRMs within their responses. We randomly sampled 500 responses and constructed two variants by (i) retaining the original LRM-retrieved sub-table, and (ii) replacing it with a randomly selected sub-table region from the original input table.
Figure~\ref{fig:prm_analyses} (left) shows the output rewards of Qwen2.5-Math-PRM-72B on both variants. The nearly identical distributions between real and random sub-tables indicate that current PRMs fail to distinguish retrieval correctness, suggesting that they are unable to assess whether the LRMs' retrieved portion of the table corresponds to the query.

\begin{tcolorbox}[
    enhanced,
    colback=blue!5!white,      %
    colframe=black,            %
    coltitle=white,            %
    fonttitle=\bfseries,       %
    boxrule=0.5pt,
    width=\textwidth,
    top=1mm,
    bottom=1mm,
    left=2mm,                  %
    right=2mm,                 %
    before skip=6pt, after skip=6pt,
    attach boxed title to top left={
        yshift=-2mm,
        xshift=2mm
    },
    boxed title style={
        colback=black,
        sharp corners,
        boxrule=0pt,
        top=2pt, bottom=2pt, left=4pt, right=4pt
    }
]
\label{takeaway:table_retrieval}
\textbf{Takeaway 1 (Table Retrieval):} Existing PRMs are insensitive to table retrieval correctness in the reasoning steps and fail to recognize whether the retrieved content corresponds to the query.
\end{tcolorbox}

\begin{figure}[!t]
    \vspace{-5pt}
    \centering
    \includegraphics[width=\linewidth]{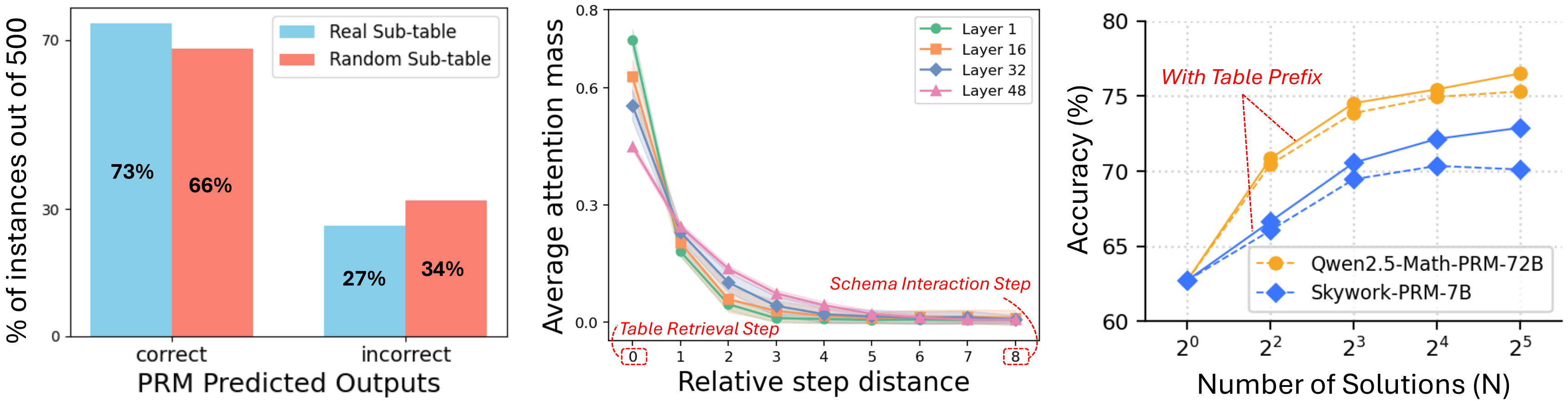}
    \caption{\textbf{Left:} PRM's rewards on 500 reasoning steps with the real-retrieved/randomly-replaced sub-table. \textbf{Middle:} Layer-wise average attention mass vs. relative step distance in tabular reasoning. Attention concentrates on nearby steps, with sharp decay as distance increases. \textbf{Right:} Best-of-N results on DeepSeek-R1-Distill-Qwen-14B for numerical reasoning with/without the table prefix.}
    \label{fig:prm_analyses}
    \vspace{-20pt}
\end{figure}

For \textit{Schema Interaction Steps}, we found in prior experiments that in the logic flow of LRMs' trajectories, table retrieval steps typically occur at the beginning, as the model must first extract relevant information from the table to answer the query. In contrast, schema interaction steps frequently occur far sentences from the beginning table retrieval steps, since LRMs tend to perform intermediate reasoning before revisiting their retrieved contents when needed.
Figure~\ref{fig:prm_analyses} (middle) illustrates the attention distribution of the LRM between the schema interaction step (step 8) and the table retrieval step (step 0). Due to the auto-regressive nature of LRMs, the schema interaction step attends primarily to nearby steps while assigning little attention to the earlier retrieval step. This inherent locality bias causes the model to frequently misinterpret or discard previously retrieved contents, even when the retrieval step has already extracted the correct information.
Moreover, current PRMs fail to supervise such misinterpretations, as their evaluations are highly localized to the current step rather than capturing dependencies on distant prior steps \citep{zou2025reasonflux,feng2025prm}.

\begin{tcolorbox}[
    enhanced,
    colback=blue!5!white,      %
    colframe=black,            %
    coltitle=white,            %
    fonttitle=\bfseries,       %
    boxrule=0.5pt,
    width=\textwidth,
    top=1mm,
    bottom=1mm,
    left=2mm,                  %
    right=2mm,                 %
    before skip=6pt, after skip=6pt,
    attach boxed title to top left={
        yshift=-2mm,
        xshift=2mm
    },
    boxed title style={
        colback=black,
        sharp corners,
        boxrule=0pt,
        top=2pt, bottom=2pt, left=4pt, right=4pt
    }
]
\textbf{Takeaway 2 (Schema Interaction):} Schema interaction steps under-attend to distant table retrieval contents due to locality bias. PRMs miss these failures since they can't look ahead and capture long-range dependencies among distant steps.
\end{tcolorbox}

\textbf{Table Prefix is the Key.} 
To explore potential solutions to the limitation above, we begin with a simple input modification for PRMs: prepending the retrieved table contents as a prefix to each schema interaction step. This grants PRMs direct access to the retrieval context, alleviating the need for long-range dependencies. 
We evaluate this modification and report the results in Figure~\ref{fig:prm_analyses} (right). 
Incorporating the table prefix indeed improves PRM supervision and leads to stronger downstream LRM performance. 
However, directly applying the prefix remains challenging, as current PRMs cannot automatically identify schema interaction steps, and the table prefixes obtained from LRMs are not guaranteed to be correct without proper supervision.

\textbf{Motivation for \our{}.} Our analyses above highlight the need for a principled step-level verifier capable of providing robust supervision over both table-grounded operations and models' inner reasoning. Motivated by this, we propose a new process reward model specifically designed to support LRMs in tabular reasoning.

\section{Building a Table-Grounded Step Verifier}
\label{sec:method}

We introduce \our{}, a generative PRM that provides reward supervision over both table operations and model inner thinking steps. Our method builds on two key components: 
(i) a large-scale data curation pipeline that synthesizes reasoning and tool usage for PRM training, and  
(ii) a dual-stage training paradigm that learns step-level verification with tool use optimization.

\subsection{Table-Aware and Tool-Integrated Supervision}
\label{sec:table_reward}

\textbf{Table-Aware Reward.} To align with the LRM's reasoning process on table tasks, we separate the supervision of table operations from model's inner reasoning part and decompose \our{}'s step-level reward (Eq.~\ref{eqa:prm_reward}) into two components:

\begin{equation}
\label{eqa:prm_reward_}
  r_i = 
  \begin{cases}
    r_{i,\text{rea}}, \text{ if } a_i \in \text{inner-thinking,}\\
    r_{i,\text{tab}}, \text{ if } a_i \in \text{table retrieval or schema interaction,}\\
  \end{cases}
  \text{and } r_\tau = \frac{1}{L}\sum_{i=1}^{L} r_i,
\end{equation}
where $r_{i,\text{rea}}$ captures the correctness of the model inner-reasoning process, $r_{i,\text{tab}}$ reflects the accuracy of table-grounded operations, and $r_\tau$ denotes the trajectory-level reward.

\textbf{Tool Integration in Verification.} A major limitation of current PRMs is their inability to supervise table-involved reasoning steps (as shown in Section~\ref{sec:analyses}). Meanwhile, recent studies~\citep{feng2025retool, qian2025toolrlrewardtoollearning} have shown that LLM agents can autonomously use \textbf{tools} to interact with external environments and iteratively refine their reasoning. In a similar spirit to address current PRM's limitation, we incorporate several external table-oriented tools into \our{}'s verification process to enable more reliable step supervision. We next describe how we curate a training set with tool-augmented, table-aware rewards and use it to train \our{}.

\subsection{\our{} Data Curation Pipline}
\label{sec:data_curation}

We design a large-scale data curation pipeline that simulates real-world scenarios of PRM tool use and step verification at scale. As illustrated in Figure \ref{fig:data_pipline}, there are three main stages:

\textbf{\circnum{1}Reasoning Trajectory Generation.} 
We begin by collecting trajectory responses from expert LRMs (e.g., DeepSeek-R1 \citep{deepseek_r1} and Claude-Opus-4.1 \citep{anthropic2025_claude_opus_sonnet_syscard}) on table-based questions drawn from diverse benchmarks, including TableInstruct~\citep{tablebench}, HybridQA~\citep{chen2020hybridqa}, ToTTo~\citep{parikh2020totto}, and WikiTQ~\citep{wikitq}.
We generate multiple responses per query and apply dual verification with human annotators and expert LLMs to filter out low-quality data, yielding a high-quality trajectory pool $\mathcal{T}_{\text{pool}}$ for subsequent labeling.

\textbf{\circnum{2}Verification Synthesis \& Reward Assignment.} 
We next provide step-level verification rationales and reward labels for each candidate response in $\mathcal{T}_{\text{pool}}$. 
(i) For \textit{table retrieval steps}, we extract the sub-table in each step and use LLM-as-a-judge to assess its relevance to the query, assigning table reward $r_{i,\text{tab}} \in \{-1,1\}$ based on retrieval correctness. 
(ii) For \textit{schema interaction steps}, we prepend the accurate sub-table as a table prefix to each collected verification rationale (according to our table-prefix analysis in Section~\ref{sec:analyses}) and assign $r_{i,\text{tab}} \in \{-1,1\}$ based on the correctness of the specific table-based operations or reasoning. (iii) For \textit{inner-thinking steps}, which involve no table contents, we apply LLM-as-a-judge and follow established labeling strategies~\citep{zhao2025genprmscalingtesttimecompute, khalifa2025processrewardmodelsthink} to assign $r_{i,\text{rea}} \in \{-1,1\}$ based on reasoning quality.

\begin{figure}[!t]
    \centering
    \includegraphics[width=\linewidth]{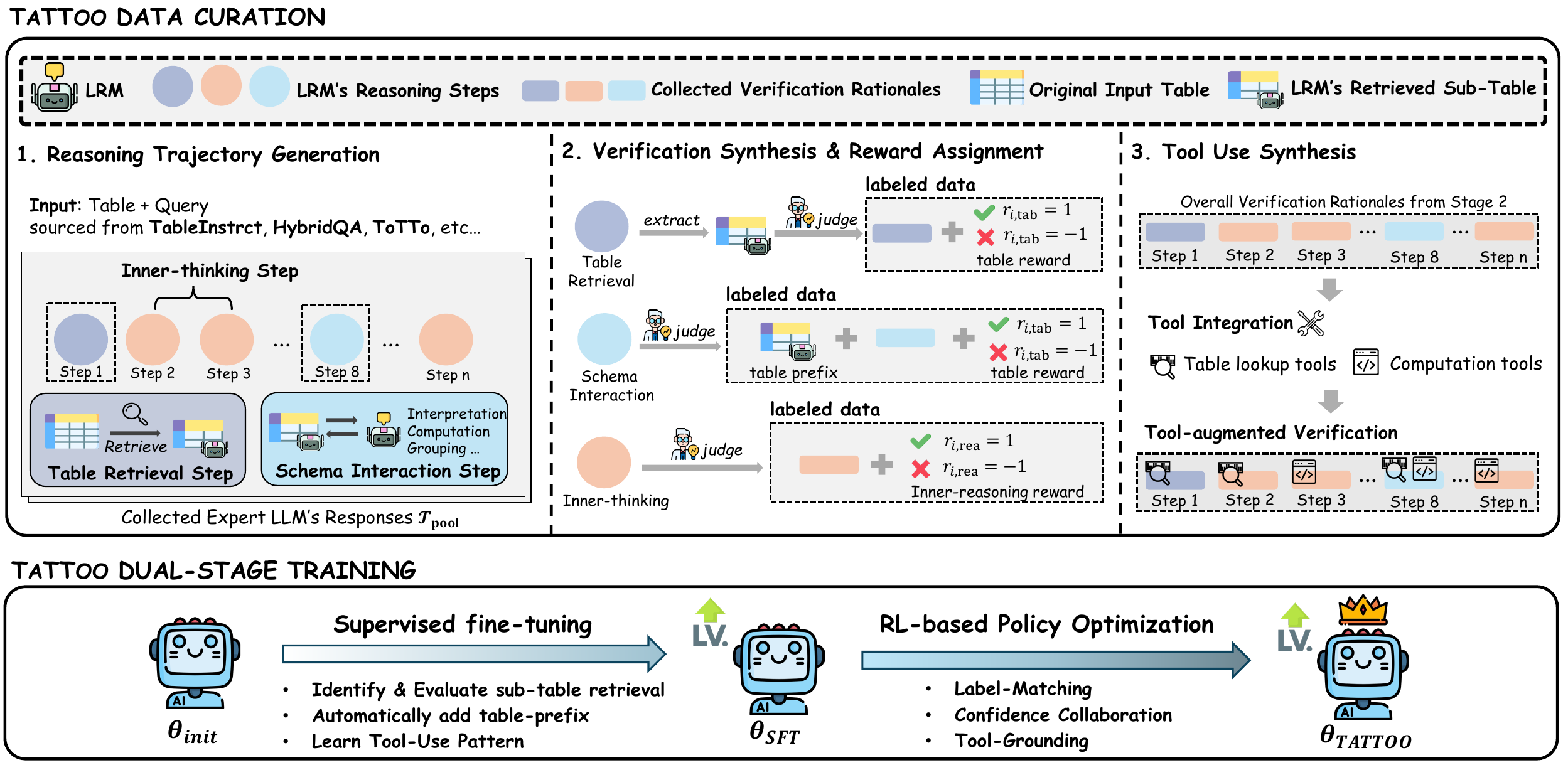}
    \caption{\textbf{Overview of \our{} framework.} We first curate 60k high-quality instances by collecting expert verification rationales with tool integration (Section~\ref{sec:data_curation}). We then train our PRM through a dual-stage training paradigm to achieve tool-grounded step-by-step reward supervision (Section~\ref{sec:training_PRM}).
    }
    \label{fig:data_pipline}
    \vspace{-5pt}
\end{figure}

\textbf{\circnum{3}Tool Use Synthesis.} 
To train \our{} to leverage tools for more accurate verification, we further augment the collected verification rationales with tool invocations, execution results, and feedback at the step level.
Specifically, inside the rationale contents, we replace manual reasoning for table lookups or calculations with the corresponding tool call and its execution output. We primarily employ two types of table tools:
(i) \textit{Computation tools:} code snippets (e.g., Python, SQL) for arithmetic and aggregation over table inputs;  
(ii) \textit{Table Lookup tools:} DataFrame APIs (e.g., Polars) or Lookup Utilities (e.g., CSV/Excel readers) for retrieving specific rows, columns, or cells during verification.  

Finally, we construct over 60k high-quality training instances with complete verification rationales and step-level rewards. This dataset is then used to train \our{} to integrate tool use with reasoning for robust step supervision. We leave additional data curation details in Appendix \ref{app:data_curation}.

\subsection{Tool-Grounded Dual-Stage Training}
\label{sec:training_PRM}

With the training data recipe in place, we train \our{} via a dual-stage paradigm: supervised fine-tuning to capture tool-integrated verification patterns, followed by RL-based policy optimization with a newly designed reward shaping scheme to further refine our PRM's step-level rationales and ensure accurate verification.

\textbf{Table-Aware Verification with Tools via SFT.}
We first finetune our PRM $\mathcal{R}_\theta$ on the curated dataset (Section~\ref{sec:data_curation}).
Specifically, given a training instance $(T, q, \tau)$ consisting of a table $T$, a query $q$, and an LRM-generated trajectory $\tau = (a_1,\dots,a_L)$, we train the PRM to output, for each step $a_i \in \tau$, a verification rationale $v_i$ together with its corresponding step-level reward $r_i$.
By formulating PRM training as language modeling, $\mathcal{R}_\theta$ is optimized auto-regressively to (i) identify accurate sub-table regions, (ii) learn to dynamically incorporate the retrieved table prefix into each schema interaction step, and (iii) generate verification rationales with tool-integration patterns.

\textbf{Tool-Grounded Reward Shaping in RL.} Prior generative PRM approaches~\citep{liu2025can, khalifa2025processrewardmodelsthink, zhao2025genprmscalingtesttimecompute} typically conclude PRM training after the SFT stage. In contrast, we draw inspiration from recent advances in agentic RL~\citep{o1, deepseek_r1} and further apply policy optimization to more tightly align the PRM’s verification process with effective tool utilization.
Specifically, we optimize $\mathcal{R}_\theta$ with a modified GRPO~\citep{grpo} by providing dense, tool-grounded supervision signals during policy optimization.
During RL rollouts of each training instance  $(T,q,\tau)$, we replace the original rule-based GRPO supervision signal with a denser per-step reward signal $s_i$, defined as:
\begin{equation}
\label{eqa:rewar_design}
s_i \;=\;
\underbrace{\mathbbm{1}\{\hat r_i = r_i\}}_{\text{\textbf{label-matching}}}
\;-\;
\underbrace{\lambda_{\mathrm{cal}}\Big(-\log \mathcal{R}_\theta(r_i \mid T,q,\tau)\Big)}_{\text{\textbf{confidence calibration}}}
\;+\;
\underbrace{\lambda_{\mathrm{tool}} \cdot \mathrm{support}(\hat v_i)}_{\text{\textbf{tool-grounding}}},
\end{equation}
where $\hat r_i$ is the PRM's predicted step-reward and $r_i$ is the ground-truth step-reward for the input step $a_i \in \tau$; $\hat v_i$ denotes the verification rationale generated by the PRM at step $i$, and $\mathrm{support}(\hat v_i) \in \{0,1\}$ measures whether the rationale correctly incorporates tool outputs; and  $\lambda_{\mathrm{cal}}$, $\lambda_{\mathrm{tool}}$ are tunable coefficients.
Besides enforcing correctness with the \textit{label-matching term}, the \textit{confidence calibration term} stabilizes training by encouraging higher probability on the ground-truth label, and the \textit{tool-grounding term} encourages rationales that effectively incorporate tool outputs.
We then aggregate the per-step signals $s_i$ into a trajectory-level training reward, normalize it within each sampled group to compute group-relative advantages, and update the PRM $\mathcal{R}_\theta$ under the GRPO objective.

\subsection{Inference-time Policy Improvement -- An Intuitive View}
To intuitively elucidate the role of \our{} and its table-aware rewards on LRM's tabular reasoning process (Eq. \ref{eqa:prm_reward_}), we provide a theoretical analysis on the policy improvement induced by \our{}.

Recall that the goal of our PRM is to improve the generated trajectory $\tau$ sampled from a policy LRM $\pi$, i.e., $\tau \sim \pi(\cdot \mid T,q)$. 
By combining the input table and query, we represent $(T,q,a_1,\dots,a_{i-1})$ as the current state $\mathbf{s}_i$. At step $i$, the policy LRM $\pi$ samples an action $a_i \sim \pi(\cdot \mid \mathbf{s}_i)$. 
We define the $Q$-value of policy $\pi$ as the expected future success, measured by the final answer $a_L$ correctness, i.e.,
\begin{equation}
    Q^{\pi}(\mathbf{s}_i, a_i) = Q^{\pi}\big((T,q, a_1, \dots, a_{i-1}), a_i\big) = \E_{a_{i+1}, \dots, a_{L} \sim \pi(\cdot \mid \mathbf{s}_i) }\left[\mathbbm{1}_{a_L \text{is correct}} \right].
\end{equation}
The value of policy $\pi$ at state $\mathbf{s}_i$ is defined as the expectation of $Q$-values under $\pi$'s next action distribution:
$V^{\pi}(\mathbf{s}_i) = \E_{a_i \sim \pi(\cdot \mid \mathbf{s}_i)} [Q^{\pi}(\mathbf{s}_i, a_i)]$. 
We now analyze the policy improvement afforded by \our{}'s table-aware reward $r_i$ supervision under one step of a natural policy gradient updating.

\begin{theorem}[\textbf{Policy Improvement (Lower Bound)}]
\label{theorem:improvement}
Given the current policy $\pi$, after one natural policy gradient update step guided by the PRM reward $r_i$ defined in Eq.\ref{eqa:prm_reward_}, we obtain the revised policy $\pi'(a_i \mid \mathbf{s}_i) \propto \exp(Q^{\pi}(\mathbf{s}_i, a_i) + r_i(\mathbf{s}_i, a_i)) $. 
The resulting expected policy improvement over the state distribution $\rho$ then satisfies:
{\small
\begin{align}
    \nonumber\E_{\mathbf{s}_i\sim\rho}  \left[V^{\pi'}(\mathbf{s}_i) - V^{\pi}(\mathbf{s}_i)\right] \gtrsim 
    \underbrace{\E_{\mathbf{s}_i\sim\rho}\Var_{a_i\sim \pi(\cdot\mid \mathbf{s}_i)}\left[r_{i, \text{tab}}(\mathbf{s}_i, a_i)\right]}_{\normalsize\textbf{\textcolor{red!60}{distinguishability from table reward $r_{i,\text{tab}}$}}} +
    \underbrace{\E_{\mathbf{s}_i\sim\rho}\Var_{a_i\sim \pi(\cdot\mid \mathbf{s}_i)}\left[r_{i,\text{rea}}(\mathbf{s}_i, a_i)\right]}_{\normalsize\textbf{distinguishability from inner-reasoning reward $r_{i,\text{rea}}$}} \\
   +\underbrace{\E_{\mathbf{s}_i\sim\rho}\E_{a_i\sim \pi(\cdot\mid \mathbf{s}_i)}\left[r_{i,\text{tab}}(\mathbf{s}_i, a_i)A^{\pi}(\mathbf{s}_i, a_i)\right]}_{\normalsize\textbf{\textcolor{red!60}{alignment between $r_{i, \text{tab}}$ and $A^{\pi}$}}}
   + \underbrace{\E_{\mathbf{s}_i\sim\rho}\E_{a_i\sim \pi(\cdot\mid \mathbf{s}_i)}\left[r_{i, \text{rea}}(\mathbf{s}_i, a_i)A^{\pi}(\mathbf{s}_i, a_i)\right]}_{\normalsize\textbf{alignment between $r_{i, \text{rea}}$ and $A^{\pi}$}},
\end{align}
}

where $A^{\pi}(\mathbf{s}_i, a_i) = Q^\pi(\mathbf{s}_i, a_i) - V^{\pi}(\mathbf{s}_i)$ denotes the advantage of policy $\pi$. 
\end{theorem}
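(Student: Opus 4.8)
The plan is to treat this as a standard natural-policy-gradient (NPG) improvement bound, specialized to the softmax update $\pi'(a_i\mid\mathbf{s}_i)\propto\exp\!\big(Q^\pi(\mathbf{s}_i,a_i)+r_i(\mathbf{s}_i,a_i)\big)$. First I would fix a state $\mathbf{s}_i$ and analyze the per-state improvement $V^{\pi'}(\mathbf{s}_i)-V^{\pi}(\mathbf{s}_i)$. Using the performance-difference identity (or, equivalently, a first-order expansion of the value in the policy logits), one writes $V^{\pi'}(\mathbf{s}_i)-V^{\pi}(\mathbf{s}_i)=\E_{a_i\sim\pi'}[A^\pi(\mathbf{s}_i,a_i)]$ plus higher-order curvature terms coming from the change of future state-visitation under $\pi'$; for a one-step update these are controlled and I would absorb them into the $\gtrsim$ (this is exactly why the statement is a lower bound up to constants rather than an equality). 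The job is then to lower-bound $\E_{a_i\sim\pi'}[A^\pi(\mathbf{s}_i,a_i)]$ in terms of quantities under the \emph{old} policy $\pi$.

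Next I would expand the new action distribution. Since $\pi'(a_i\mid\mathbf{s}_i)=\pi(a_i\mid\mathbf{s}_i)\exp(r_i(\mathbf{s}_i,a_i))/Z(\mathbf{s}_i)$ with $Z(\mathbf{s}_i)=\E_{a_i\sim\pi}[\exp(r_i)]$ (using that the $Q^\pi$ factor is already baked into $\pi$ up to normalization in the NPG step, or by a standard linearization of $\exp$ around the mean), a first-order expansion gives $\pi'(a_i\mid\mathbf{s}_i)\approx\pi(a_i\mid\mathbf{s}_i)\big(1+r_i(\mathbf{s}_i,a_i)-\E_{\pi}[r_i]\big)$. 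Plugging this into $\E_{a_i\sim\pi'}[A^\pi]$ and using $\E_{a_i\sim\pi}[A^\pi(\mathbf{s}_i,a_i)]=0$, the leading term becomes $\E_{a_i\sim\pi}\big[(r_i-\E_\pi[r_i])A^\pi\big]=\Cov_{a_i\sim\pi}\big(r_i,A^\pi\big)$. Then I would split $\Cov(r_i,A^\pi)$ by decomposing $r_i$ via Eq.~\ref{eqa:prm_reward_}: on the inner-thinking actions $r_i=r_{i,\text{rea}}$ and on the table actions $r_i=r_{i,\text{tab}}$, so the covariance (and similarly the variance term below) splits additively into the $r_{i,\text{tab}}$ and $r_{i,\text{rea}}$ contributions that appear as the two ``alignment'' terms. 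To surface the two ``distinguishability'' (variance) terms, I would keep one more order in the expansion of $\exp(r_i)$: the second-order term contributes $\tfrac12\E_\pi[(r_i-\E_\pi[r_i])^2 A^\pi]$-type pieces, but more cleanly one observes that when $A^\pi$ is itself well-aligned with $r_i$ (which the NPG update enforces, since the same $r_i$ reshapes $\pi$ toward high-$A^\pi$ actions), $\Cov(r_i,A^\pi)$ is bounded below by a multiple of $\Var_{a_i\sim\pi}[r_i]=\Var[r_{i,\text{tab}}]+\Var[r_{i,\text{rea}}]$ (again splitting by action type). Taking expectation over $\mathbf{s}_i\sim\rho$ and collecting the four pieces yields the claimed bound.

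The main obstacle, and the place where the ``$\gtrsim$'' is doing real work, is the control of the higher-order terms: the change in the discounted state-visitation distribution induced by replacing $\pi$ with $\pi'$, and the remainder in the $\exp(r_i)$ linearization. The clean way to handle this is to assume the reward magnitudes $|r_i|$ are bounded (which holds here since $r_{i,\text{tab}},r_{i,\text{rea}}\in\{-1,1\}$ by construction in Section~\ref{sec:data_curation}), so that $Z(\mathbf{s}_i)$ and all remainders are $O(1)$ and can be folded into the hidden constant; one then only needs the first-order terms to be nonnegative on average, which is precisely the content of the four explicit terms. I would also note that the decomposition into ``table'' and ``reasoning'' contributions is purely bookkeeping once Eq.~\ref{eqa:prm_reward_} is invoked, since the two action types partition the step index set, so no independence assumption between the two reward components is needed. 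The remaining steps — writing the performance-difference lemma, the two-term Taylor expansion of the softmax, and the covariance/variance algebra — are routine and I would carry them out in the appendix.
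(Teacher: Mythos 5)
Your overall scaffolding (performance--difference lemma, softmax/NPG tilt, first-order expansion, then splitting by step type) matches the paper's route, and your derivation of the two \emph{alignment} terms via $\E_{a_i\sim\pi}\big[(r_i-\E_\pi[r_i])A^\pi\big]=\E_\pi[r_iA^\pi]$ is fine. But there is a genuine gap in how you produce the two \emph{variance} (``distinguishability'') terms. Expanding $\E_{a_i\sim\pi'}[A^\pi]$ directly, as you do, can only yield $\Cov_\pi(r_i,A^\pi)$ at first order (or $\Var_\pi[A^\pi]+\Cov_\pi(r_i,A^\pi)$ if you keep the $Q^\pi/A^\pi$ part of the exponent that you discard as ``baked into $\pi$''); it never yields $\Var_\pi[r_i]$. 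Your two proposed fixes do not close this: the second-order term is of the form $\tfrac12\E_\pi[(r_i-\E_\pi r_i)^2A^\pi]$, which is $A^\pi$-weighted and can be negative, and the claim that the NPG update ``enforces'' $\Cov_\pi(r_i,A^\pi)\gtrsim\Var_\pi[r_i]$ is unjustified and essentially assumes the conclusion --- a noisy PRM with $r_i$ uncorrelated with $A^\pi$ has $\Cov=0$ but $\Var[r_i]>0$, so that inequality fails. The paper's key intermediate step, which your proposal is missing, is to first convert the improvement into an inner product against the reward itself: writing $A^\pi=\tfrac1\gamma\log\tfrac{\pi'}{\pi}-r_i+\tfrac1\gamma\log Z^\pi$ and using $\mathrm{KL}(\pi'\Vert\pi)\ge0$, Jensen on $\log Z^\pi$, and $\E_\pi[A^\pi]=0$ gives $\E_{\pi'}[A^\pi]\ge\E_{\pi'}[r_i]-\E_\pi[r_i]=\langle\pi'-\pi,\,r_i\rangle$; only then does the first-order tilt expansion (with the full exponent $A^\pi+r_i$) give $\Var_\pi[r_i]+\E_\pi[r_iA^\pi]$, which is where the variance terms come from.

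A secondary issue: splitting $\Var_\pi[r_i]$ into $\Var_\pi[r_{i,\text{tab}}]+\Var_\pi[r_{i,\text{rea}}]$ is not ``purely bookkeeping.'' Mutual exclusivity of the two components gives $\E_\pi[r_{i,\text{tab}}\,r_{i,\text{rea}}]=0$ but not $\Cov_\pi(r_{i,\text{tab}},r_{i,\text{rea}})=0$, since the product of the means need not vanish; the paper handles this by per-state centering of each component (justified by baseline invariance of the policy-gradient update), which is the step you would also need to make the cross term disappear. Your boundedness remark ($|r_i|\le1$) for absorbing remainders into the hidden constant is consistent with what the paper implicitly does, and the handling of $d_\rho^{\pi'}$ versus $\rho$ is treated at the same level of informality in both arguments, so those points are not at issue.
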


Theorem~\ref{theorem:improvement} (proof in Appendix~\ref{app:proof}) explains that our decomposable reward design $r_i$ enables each component to additively contribute to policy improvement, provided that the reward components are each individually aligned with the policy advantage function. In this way, the table-aware rewards provided by \our{} help ensure targeted supervision on both inner reasoning and table-involved operations generated by LRMs. Below, we further empirically evaluate the effectiveness of \our{} across various downstream tabular reasoning tasks.

\vspace{-5pt}
\section{Empirical Evaluations}
\vspace{-5pt}
\label{sec:exp}

\textbf{Baselines and Models.} 
We compare \our{} against various types of step-level verification methods, including advanced PRMs, majority voting \citep{liu2025can}, and LLM-as-a-judge \citep{zheng2023judging}. The setups for these baselines are aligned with our preliminary analyses in Section~\ref{sec:analyses}. For PRM approaches, we include both discriminative (Qwen-PRM series \citep{zhang2025lessons}, Math-Shepherd-PRM \citep{wang2024mathshepherdverifyreinforcellms}, and Skywork-PRM \citep{skyworkopeno12024}) and generative (ThinkPRM \citep{khalifa2025processrewardmodelsthink} and GenPRM \citep{zhao2025genprmscalingtesttimecompute}). 
Regarding the policy reasoning models, we evaluate our proposed method on DeepSeek-R1-Distill-Qwen-14B \citep{deepseek_r1}.
Further details on the baselines and policy models setups are provided in Appendix~\ref{app:baseline_model}.

\textbf{Datasets.} 
We evaluate on four representative and challenging benchmarks spanning diverse tabular reasoning tasks, including 
(i) TableBench (TB) \citep{tablebench}, a complex tabular reasoning benchmark with 886 questions covering tasks of numerical reasoning (NR), fact checking (FC), and data analysis (DA).  
(ii) WTQ \citep{wikitq}, a benchmark for complex question answering over Wikipedia tables.    
(iii) MMQA \citep{wummqa}, a multi-table understanding benchmark covering table retrieval, multi-hop \& multi-table QA and text-to-SQL generation. We leave the additional dataset descriptions in Appendix \ref{app:dataset}.

\textbf{Implementation Details.} 
We train \our{} on the off-the-shelf Qwen-3-8B model \citep{qwen3} using our 60k curated training instances (Section~\ref{sec:data_curation}). All training and inference experiments are conducted on 8×A100-80G GPUs. To evaluate \our{} under different TTS strategies, we adopt three representative methods, including Best-of-N \citep{brown2024largelanguagemonkeysscaling}, Beam Search \citep{snell2024scaling}, and Diverse Verifier Tree Search (DVTS) \citep{beeching2024scalingtesttimecompute}. Additional implementation details on training setup and configurations of \our{} are provided in Appendix~\ref{app:implementation}.

\begin{table*}[!t]
    \centering
    \caption{Main results of \our{} on 5 different tabular reasoning tasks. We report the best-of-N (with $N = \{4,8,16,32\}$) performance using DeepSeek-R1-Distill-Qwen-14B as the policy model and compare against various step verifiers. The best and second-best results are highlighted. \our{} consistently achieves state-of-the-art TTS performance with significantly fewer parameters.}
    \label{tab:main_res}
    \resizebox{\textwidth}{!}{
    \begin{tabular}{l|c|cccc|cccc|cccc|cccc|cccc}
        \toprule
        \multirow{2}{*}{\textbf{Verifer (Best-of-N)}} & \multirow{2}{*}{\textbf{Params}} & \multicolumn{4}{c}{\textbf{TB-NR}} & \multicolumn{4}{c}{\textbf{TB-FC}} & \multicolumn{4}{c}{\textbf{TB-DA}} & \multicolumn{4}{c}{\textbf{WTQ}} & \multicolumn{4}{c}{\textbf{MMQA}} \\
        \cmidrule(lr){3-6} \cmidrule(lr){7-10} \cmidrule(lr){11-14} \cmidrule(lr){15-18} \cmidrule(lr){19-22}
        & & 4 & 8 & 16 & 32 & 4 & 8 & 16 & 32 & 4 & 8 & 16 & 32 & 4 & 8 & 16 & 32 & 4 & 8 & 16 & 32\\
        \midrule[0.35pt]
        Majority Vote & - &
        65.5 & 65.9 & 66.8 & 66.5 &
        76.2 & 77.3 & 77.3 & 77.4 &
        23.5 & 24.5 & 26.0 & 26.1 &
        64.7 & 65.3 & 67.3 & 67.0 &
        18.4 & 19.4 & 20.4 & 20.1 \\
        LLM-as-a-judge & - &
        66.7 & 66.9 & 67.1 & 66.9 &
        77.2 & 78.3 & 78.4 & 78.6 &
        23.5 & 27.4 & 28.0 & 28.4 &
        65.2 & 66.4 & 68.1 & 68.1 &
        19.6 & 21.3 & 22.5 & 22.7 \\
        Skywork-PRM-7B & 7B &
        66.1 & 69.5 & 70.3 & 70.1 &
        76.8 & 78.4 & 78.6 & 78.3 &
        24.1 & 27.5 & 28.9 & 29.1 &
        65.9 & 67.5 & 68.4 & 68.6 &
        21.4 & 24.6 & 25.1 & 25.3 \\
        Math-Shepherd-PRM-7B & 7B &
        67.2 & 70.6 & 71.5 & 71.8 &
        76.2 & 76.9 & 76.8 & 77.1 &
        22.7 & 24.8 & 26.4 & 25.9 &
        66.8 & 68.7 & 69.6 & 69.3 &
        22.0 & 25.2 & 25.9 & 26.1 \\
        Qwen2.5-Math-PRM-7B & 7B &
        66.9 & 70.1 & 71.7 & 72.5 &
        75.4 & 77.2 & 77.9 & 77.4 &
        23.2 & 25.4 & 26.3 & 26.6 &
        65.2 & 68.5 & 69.6 & 69.7 &
        23.5 & 25.2 & 27.1 & 27.3 \\
        ThinkPRM & 14B &
        69.2 & 70.7 & 73.5 & 73.8 &
        75.8 & 75.4 & 76.3 & 76.9 &
        21.6 & 22.7 & 23.1 & 22.8 &
        64.3 & 66.1 & 65.7 & 65.9 &
        22.4 & 22.7 & 23.6 & 23.0 \\
        GenPRM & 32B &
        \best{71.5} & 73.5 & 73.7 & 74.2 &
        76.3 & 78.5 & 79.2 & 79.4 &
        25.3 & 27.9 & 30.2 & 30.7 &
        \best{69.8} & \best{72.5} & \sbest{73.3} & \sbest{73.1} &
        23.8 & 25.4 & 26.2 & 26.4 \\
        Qwen2.5-Math-PRM-72B & 72B &
        70.4 & \sbest{73.8} & \sbest{74.9} & \sbest{75.3} &
        \best{77.8} & \sbest{79.2} & \sbest{79.8} & 79.8 &
        \sbest{25.5} & \sbest{31.5} & \sbest{32.0} & \sbest{32.4} &
        \sbest{69.2} & 71.8 & 73.0 & 72.6 &
        \sbest{24.4} & \sbest{26.8} & \sbest{28.7} &\sbest{28.6} \\
        \textbf{\our{}} & 8B &
        \sbest{71.2} & \best{74.2} & \best{76.4} & \best{78.1} &
        \sbest{77.4} & \best{79.6} & \best{81.2} & \best{82.0} &
        \best{27.7} & \best{31.9} & \best{33.6} & \best{34.3} &
        \best{69.8} & \sbest{72.3} & \best{73.5} & \best{74.9} & \best{25.1} & 
        \best{27.2} & \best{29.1} &
        \best{30.5} \\
        \bottomrule
    \end{tabular}
    }
    \vspace{-10pt}
\end{table*}

\subsection{Main Results}

Table~\ref{tab:main_res} reports the Best-of-N performance of incorporating \our{} on the DeepSeek-R1-Distill-Qwen-14B model across five tabular reasoning tasks. 
Notably, \our{} consistently outperforms strong baselines such as GenPRM-32B and Qwen2.5-Math-PRM-72B despite using only 8B parameters. 
On TB-DA, \our{} achieves the largest accruacy performance across each level of N, rising from 27.7\% at N${=}4$ to 34.3\% at N${=}32$. 
Moreover, while existing PRMs often suffer from performance bottlenecks beyond a certain response threshold (as observed in Section~\ref{sec:analyses}), \our{} continues to scale effectively, yielding consistent gains as the response group size increases.
For example, on TB-NR, Qwen2.5-Math-PRM-72B saturates after N${=}16$ (74.9\% $\rightarrow$ 75.3\%), whereas \our{} continues to improve from 74.2\% at N${=}8$ to 78.1\% at N${=}32$.
These results demonstrate that \our{} provides stronger reward supervision on LRMs' reasoning trajectories, therefore yielding better performance improvement compared with other step-verification baselines.

\begin{figure}[!h]
    \centering
    \includegraphics[width=0.85\linewidth]{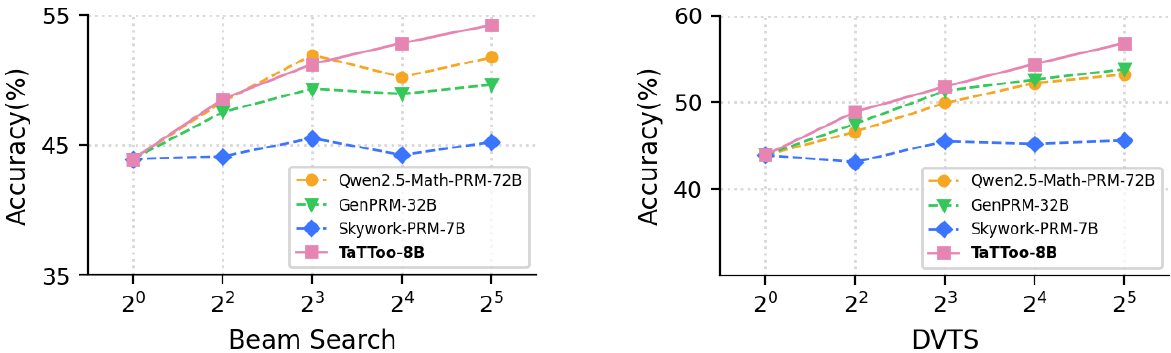}
    \caption{Performance of \our{} on two additional TTS strategies, Beam Search and Diverse Verifier Tree Search (DVTS). We report the average accuracy across all 5 tabular reasoning tasks.
    }
    \label{fig:main_res}
    \vspace{-5pt}
\end{figure}

\textbf{Generalizability on Other TTS Strategies.} 
Beyond best-of-$N$, we also evaluate \our{} under two additional TTS strategies (Beam Search and DVTS) and compare with the strongest PRM baselines. Figure~\ref{fig:main_res} reports the average performance across the five tabular reasoning tasks. Under each TTS strategy, \our{} consistently yields steady improvements as the number of responses N increases, whereas other baseline PRMs often plateau. For example, in beam search, \our{} improves from 45.0\% to 54.8\%, while GenPRM saturates around 51\% and Skywork-PRM remains below 46\%. These results highlight the strong generalizability of \our{} across diverse TTS strategies.

\subsection{In-depth Analyses on \our{}}

\textbf{Mastery of RL with Bootstrapping from SFT.}
To examine the respective roles of SFT and RL in \our{}'s dual-stage training paradigm, we compare against a variant \our{}~(SFT only), which is trained solely on the first SFT stage. 
As shown in Table~\ref{tab:ablation}, under the Best-of-N evaluation, the second-stage RL policy optimization consistently improves performance over the SFT-only initialization.
Specifically, we observe that the average accuracy across all three tasks improves from 72.3\% (SFT only) to 78.5\% after RL training, yielding a total gain of 10.2\%.
This demonstrates that bootstrapping from SFT provides a solid initialization, while RL optimization further enhances our PRM's reasoning and tool-use effectiveness during the verification process.

\begin{table}[!t]
    \centering
    \caption{In-depth analysis of \our{} on three table datasets. We evaluate the contributions of SFT and RL training stages, and assess the impact of reward shaping components during RL optimization.}
    \label{tab:ablation}
    \small
    \resizebox{\textwidth}{!}{
    \begin{tabular}{lcccccccccccc}
        \toprule
        \multirow{2}{*}{\textbf{Training Variants}} & \multicolumn{4}{c}{\textbf{TB-NR}} & \multicolumn{4}{c}{\textbf{TB-FC}} & \multicolumn{4}{c}{\textbf{TB-DA}}\\
        \cmidrule(lr){2-5} \cmidrule(lr){6-9} \cmidrule(lr){10-13}
        & 4 & 8 & 16 & 32 & 4 & 8 & 16 & 32 & 4 & 8 & 16 & 32\\
            \midrule[0.35pt]
            \textit{\textbf{\our{} (SFT only)}} & 
            67.9 & 69.1 & 72.0 & 73.7 &
            71.5 & 73.0 & 74.6 & 75.2 &
            23.3 & 25.6 & 26.2 & 26.4 \\
            \midrule[0.35pt]
            \rowcolor{gray!20}
            \textbf{\our{}} &
            \textbf{71.2} & \textbf{74.2} & \textbf{76.4} & \textbf{78.1} &
            \textbf{77.4} & \textbf{79.6} & \textbf{81.2} & \textbf{82.0} &
            \textbf{27.7} & \textbf{31.9} & \textbf{33.6} & \textbf{34.3} \\
            \textit{w/o tool-grounding} &
            68.5 & 71.1 & 72.7 & 74.6 &
            73.2 & 75.6 & 75.5 & 76.3 &
            26.2 & 28.1 & 28.7 & 30.3 \\
            
            \textit{w/o confidence caliboration} &
            71.1 & 73.7 & 74.3 & 76.2 &
            76.4 & 76.7 & 78.4 & 80.5 &
            27.4 & 29.5 & 31.3 & 33.2\\

            \textit{rule-based (GRPO)} &
            67.0 & 68.4 & 70.4 & 73.1 &
            71.6 & 74.0 & 74.9 & 75.8 &
            25.5 & 27.4 & 28.0 & 28.6\\
            
            \bottomrule
    \end{tabular}
    }
\end{table}

\begin{wrapfigure}{r}{0.41\textwidth}
  \centering
  \vspace{-15pt}
  \includegraphics[width=0.41\textwidth]{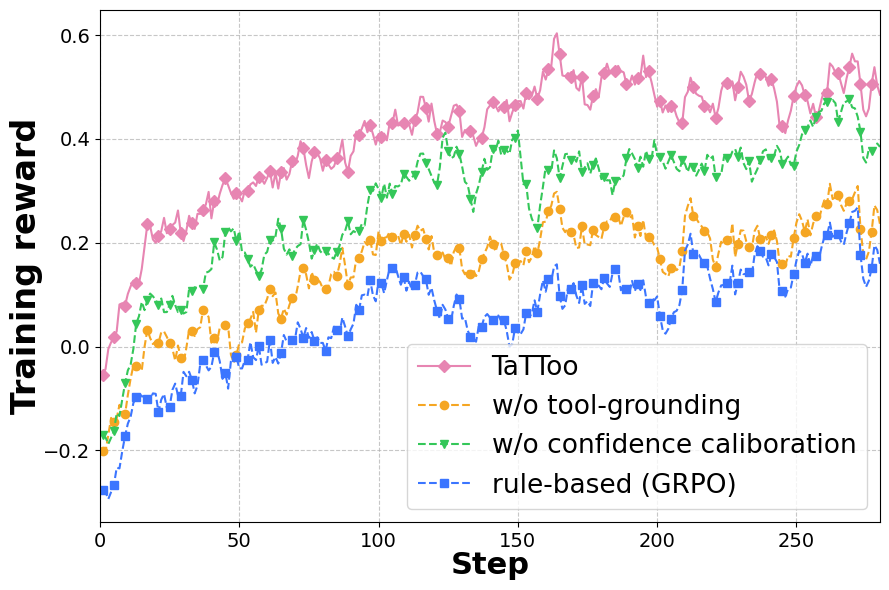}
  \caption{Training dynamics of \our{} and ablated variants. We report the training reward across 280 training steps.}
  \label{fig:training_dynamics}
  \vspace{-10pt}
\end{wrapfigure}

\textbf{Reward Shaping during RL Training.}  
Next, we analyze the effectiveness of each supervised component in our per-step reward signal $s_i$ design (Eq.~\ref{eqa:rewar_design}), with the ablation results reported in Table~\ref{tab:ablation}.
Removing the tool-grounding term yields the largest drop (e.g., $\downarrow $4.0\% on TB-DA at N${=}32$), highlighting its critical role in encouraging effective tool use during RL training. In addition, excluding confidence calibration reduces performance by 1.6\% on average, showing its complementary effect in stabilizing reward signals.
We also compare \our{} with the original rule-based group-relative reward from GRPO, which yields only marginal improvement over SFT. 
Finally, Figure~\ref{fig:training_dynamics} visualizes the training dynamics of \our{} and other variants during RL optimization.

\textbf{Additional Experiments.} Additional experiments on \our{} including ablations on the training coefficients and case studies are provided in Appendix~\ref{app:add_exps}.

\section{Related Works}
Reasoning over tables poses a unique challenge for LLMs, requiring them to bridge natural language understanding with structured reasoning over rows, columns, and cell values \citep{jin2022survey, zhang2025survey}.
Recent works \citep{tang2020rpt,iida2021tabbie,deng2022turl} have investigated tabular reasoning on several downstream tasks, including table QA \citep{wikitq, chen2020hybridqa}, table fact verification \citep{chen2019tabfact,parikh2020totto}, text-to-SQL \citep{mohammadjafari2024natural}, etc. 
Early-stage tabular reasoning methods, such as TAPAS \citep{herzig2020tapas} and TaBERT \citep{yin2020tabert}, encode table data into transformer-based encoder representations. Later methods leverage the capabilities of LLMs to apply either prompt engineering \citep{sui2023tap4llm, chain_of_table} or supervised fine-tuning techniques \citep{tablegpt2, zhang2023tablellama} for enhanced tabular reasoning. More recent works, including the Table-R1 series~\citep{wu2025table, yang2025tabler1inferencetimescalingtable, jin2025table} and Reasoning-Table~\citep{lei2025reasoning}, leverage RL to acquire higher-quality reasoning paths during reasoning over tables. 

While these recent advances have focused on improving the generation ability of models on tables, how to provide robust and verifiable reward supervision for the lengthy and complex output trajectories generated by the table-specific reasoning models remains largely unexplored \citep{zhang2024surveytablereasoninglarge}. 
This essential yet overlooked gap motivates us to develop the first tool-use and thinking PRM, which is specifically designed for enhancing test time scaling on tabular reasoning tasks. 
We leave additional related works on Process Reward Models and Tool Integration with RL in Appendix \ref{app:additional_related_work}.

\vspace{-7pt}
\section{Conclusion}
\vspace{-7pt}
We introduced \our{}, a novel tool-augmented thinking PRM tailored for tabular reasoning. By diagnosing why existing verifiers fail on table retrieval and schema interaction, we built a scalable pipeline with expert rationales, table prefixes, and tool-augmented verification, and trained our model via SFT followed by RL with reward shaping. \our{} achieves comparable performance across five table benchmarks, surpassing strong PRMs with up to 9× parameter efficiency and generalizing across multiple TTS strategies. Our results underscore the importance of table-grounded reward supervision and point toward future directions in reward modeling for structured reasoning tasks.

\bibliography{main}

\begin{thebibliography}{85}
\providecommand{\natexlab}[1]{#1}
\providecommand{\url}[1]{\texttt{#1}}
\expandafter\ifx\csname urlstyle\endcsname\relax
  \providecommand{\doi}[1]{doi: #1}\else
  \providecommand{\doi}{doi: \begingroup \urlstyle{rm}\Url}\fi

\bibitem[Akhtar et~al.(2023)Akhtar, Shankarampeta, Gupta, Patil, Cocarascu, and Simperl]{akhtar2023exploring}
Mubashara Akhtar, Abhilash Shankarampeta, Vivek Gupta, Arpit Patil, Oana Cocarascu, and Elena Simperl.
\newblock Exploring the numerical reasoning capabilities of language models: A comprehensive analysis on tabular data.
\newblock \emph{arXiv preprint arXiv:2311.02216}, 2023.

\bibitem[Anthropic(2025)]{anthropic2025_claude_opus_sonnet_syscard}
Anthropic.
\newblock Claude opus 4 and claude sonnet 4 system card.
\newblock Technical report, Anthropic, 2025.

\bibitem[Beeching et~al.(2024)Beeching, Tunstall, and Rush]{beeching2024scalingtesttimecompute}
Edward Beeching, Lewis Tunstall, and Sasha Rush.
\newblock Scaling test-time compute with open models, 2024.
\newblock URL \url{https://huggingface.co/spaces/HuggingFaceH4/blogpost-scaling-test-time-compute}.

\bibitem[Brown et~al.(2024)Brown, Juravsky, Ehrlich, Clark, Le, Ré, and Mirhoseini]{brown2024largelanguagemonkeysscaling}
Bradley Brown, Jordan Juravsky, Ryan Ehrlich, Ronald Clark, Quoc~V. Le, Christopher Ré, and Azalia Mirhoseini.
\newblock Large language monkeys: Scaling inference compute with repeated sampling, 2024.
\newblock URL \url{https://arxiv.org/abs/2407.21787}.

\bibitem[Chen et~al.(2019)Chen, Wang, Chen, Zhang, Wang, Li, Zhou, and Wang]{chen2019tabfact}
Wenhu Chen, Hongmin Wang, Jianshu Chen, Yunkai Zhang, Hong Wang, Shiyang Li, Xiyou Zhou, and William~Yang Wang.
\newblock Tabfact: A large-scale dataset for table-based fact verification.
\newblock \emph{arXiv preprint arXiv:1909.02164}, 2019.

\bibitem[Chen et~al.(2020)Chen, Zha, Chen, Xiong, Wang, and Wang]{chen2020hybridqa}
Wenhu Chen, Hanwen Zha, Zhiyu Chen, Wenhan Xiong, Hong Wang, and William Wang.
\newblock Hybridqa: A dataset of multi-hop question answering over tabular and textual data.
\newblock \emph{arXiv preprint arXiv:2004.07347}, 2020.

\bibitem[Chen et~al.(2025)Chen, Li, Wang, Jin, Qian, Wang, Wang, Zhang, Zhang, Zhang, Tong, and Ji]{chen2025rmr1rewardmodelingreasoning}
Xiusi Chen, Gaotang Li, Ziqi Wang, Bowen Jin, Cheng Qian, Yu~Wang, Hongru Wang, Yu~Zhang, Denghui Zhang, Tong Zhang, Hanghang Tong, and Heng Ji.
\newblock Rm-r1: Reward modeling as reasoning, 2025.
\newblock URL \url{https://arxiv.org/abs/2505.02387}.

\bibitem[Cui et~al.(2025)Cui, Yuan, Wang, Wang, Li, He, Fan, Yu, Xu, Chen, et~al.]{cui2025process}
Ganqu Cui, Lifan Yuan, Zefan Wang, Hanbin Wang, Wendi Li, Bingxiang He, Yuchen Fan, Tianyu Yu, Qixin Xu, Weize Chen, et~al.
\newblock Process reinforcement through implicit rewards.
\newblock \emph{arXiv preprint arXiv:2502.01456}, 2025.

\bibitem[Deng et~al.(2022)Deng, Sun, Lees, Wu, and Yu]{deng2022turl}
Xiang Deng, Huan Sun, Alyssa Lees, You Wu, and Cong Yu.
\newblock Turl: Table understanding through representation learning.
\newblock \emph{ACM SIGMOD Record}, 51\penalty0 (1):\penalty0 33--40, 2022.

\bibitem[Dong et~al.(2025)Dong, Mao, Ma, Bao, Chen, Wang, Chen, Du, Wang, Zhang, et~al.]{dong2025agentic}
Guanting Dong, Hangyu Mao, Kai Ma, Licheng Bao, Yifei Chen, Zhongyuan Wang, Zhongxia Chen, Jiazhen Du, Huiyang Wang, Fuzheng Zhang, et~al.
\newblock Agentic reinforced policy optimization.
\newblock \emph{arXiv preprint arXiv:2507.19849}, 2025.

\bibitem[Dong et~al.(2021)Dong, Cordonnier, and Loukas]{dong2021attention}
Yihe Dong, Jean-Baptiste Cordonnier, and Andreas Loukas.
\newblock Attention is not all you need: Pure attention loses rank doubly exponentially with depth.
\newblock In \emph{International conference on machine learning}, pages 2793--2803. PMLR, 2021.

\bibitem[Feng et~al.(2025{\natexlab{a}})Feng, Huang, Qu, Zhang, Qin, Zhong, Jiang, Chi, and Zhong]{feng2025retool}
Jiazhan Feng, Shijue Huang, Xingwei Qu, Ge~Zhang, Yujia Qin, Baoquan Zhong, Chengquan Jiang, Jinxin Chi, and Wanjun Zhong.
\newblock Retool: Reinforcement learning for strategic tool use in llms.
\newblock \emph{arXiv preprint arXiv:2504.11536}, 2025{\natexlab{a}}.

\bibitem[Feng et~al.(2025{\natexlab{b}})Feng, Huang, Qu, Zhang, Qin, Zhong, Jiang, Chi, and Zhong]{feng2025retoolreinforcementlearningstrategic}
Jiazhan Feng, Shijue Huang, Xingwei Qu, Ge~Zhang, Yujia Qin, Baoquan Zhong, Chengquan Jiang, Jinxin Chi, and Wanjun Zhong.
\newblock Retool: Reinforcement learning for strategic tool use in llms, 2025{\natexlab{b}}.
\newblock URL \url{https://arxiv.org/abs/2504.11536}.

\bibitem[Feng et~al.(2025{\natexlab{c}})Feng, Chen, Lu, Li, Cheng, Peng, Tang, Liu, and Zhang]{feng2025prm}
Zhangying Feng, Qianglong Chen, Ning Lu, Yongqian Li, Siqi Cheng, Shuangmu Peng, Duyu Tang, Shengcai Liu, and Zhirui Zhang.
\newblock Is prm necessary? problem-solving rl implicitly induces prm capability in llms.
\newblock \emph{arXiv preprint arXiv:2505.11227}, 2025{\natexlab{c}}.

\bibitem[Guan et~al.(2025)Guan, Zhang, Liu, Shang, Sun, Zhu, Yang, and Yang]{guan2025rstar}
Xinyu Guan, Li~Lyna Zhang, Yifei Liu, Ning Shang, Youran Sun, Yi~Zhu, Fan Yang, and Mao Yang.
\newblock rstar-math: Small llms can master math reasoning with self-evolved deep thinking.
\newblock \emph{arXiv preprint arXiv:2501.04519}, 2025.

\bibitem[Guo et~al.(2025)Guo, Yang, Zhang, Song, Zhang, Xu, Zhu, Ma, Wang, Bi, et~al.]{deepseek_r1}
Daya Guo, Dejian Yang, Haowei Zhang, Junxiao Song, Ruoyu Zhang, Runxin Xu, Qihao Zhu, Shirong Ma, Peiyi Wang, Xiao Bi, et~al.
\newblock Deepseek-r1: Incentivizing reasoning capability in llms via reinforcement learning.
\newblock \emph{arXiv preprint arXiv:2501.12948}, 2025.

\bibitem[He et~al.(2024{\natexlab{a}})He, Wei, Yan, Liu, Wang, Gan, Tu, Liu, Zeng, Wang, Wang, Li, Zhang, Xu, An, Liu, and Zhou]{skyworkopeno12024}
Jujie He, Tianwen Wei, Rui Yan, Jiacai Liu, Chaojie Wang, Yimeng Gan, Shiwen Tu, Chris~Yuhao Liu, Liang Zeng, Xiaokun Wang, Boyang Wang, Yongcong Li, Fuxiang Zhang, Jiacheng Xu, Bo~An, Yang Liu, and Yahui Zhou.
\newblock Skywork-o1 open series.
\newblock \url{https://huggingface.co/Skywork}, November 2024{\natexlab{a}}.
\newblock URL \url{https://huggingface.co/Skywork}.

\bibitem[He et~al.(2024{\natexlab{b}})He, Zou, Lin, Zhou, Han, Yuan, and Zhang]{he-etal-2024-cocost}
Xinyi He, Jiaru Zou, Yun Lin, Mengyu Zhou, Shi Han, Zejian Yuan, and Dongmei Zhang.
\newblock {C}o{C}o{ST}: Automatic complex code generation with online searching and correctness testing.
\newblock In Yaser Al-Onaizan, Mohit Bansal, and Yun-Nung Chen, editors, \emph{Proceedings of the 2024 Conference on Empirical Methods in Natural Language Processing}, pages 19433--19451, Miami, Florida, USA, November 2024{\natexlab{b}}. Association for Computational Linguistics.
\newblock \doi{10.18653/v1/2024.emnlp-main.1082}.
\newblock URL \url{https://aclanthology.org/2024.emnlp-main.1082/}.

\bibitem[Herzig et~al.(2020)Herzig, Nowak, M{\"u}ller, Piccinno, and Eisenschlos]{herzig2020tapas}
Jonathan Herzig, Pawe{\l}~Krzysztof Nowak, Thomas M{\"u}ller, Francesco Piccinno, and Julian~Martin Eisenschlos.
\newblock Tapas: Weakly supervised table parsing via pre-training.
\newblock \emph{arXiv preprint arXiv:2004.02349}, 2020.

\bibitem[Iida et~al.(2021)Iida, Thai, Manjunatha, and Iyyer]{iida2021tabbie}
Hiroshi Iida, Dung Thai, Varun Manjunatha, and Mohit Iyyer.
\newblock Tabbie: Pretrained representations of tabular data.
\newblock \emph{arXiv preprint arXiv:2105.02584}, 2021.

\bibitem[Jaech et~al.(2024)Jaech, Kalai, Lerer, Richardson, El-Kishky, Low, Helyar, Madry, Beutel, Carney, et~al.]{o1}
Aaron Jaech, Adam Kalai, Adam Lerer, Adam Richardson, Ahmed El-Kishky, Aiden Low, Alec Helyar, Aleksander Madry, Alex Beutel, Alex Carney, et~al.
\newblock Openai o1 system card.
\newblock \emph{arXiv preprint arXiv:2412.16720}, 2024.

\bibitem[Jin et~al.(2022)Jin, Siebert, Li, and Chen]{jin2022survey}
Nengzheng Jin, Joanna Siebert, Dongfang Li, and Qingcai Chen.
\newblock A survey on table question answering: recent advances.
\newblock In \emph{China Conference on Knowledge Graph and Semantic Computing}, pages 174--186. Springer, 2022.

\bibitem[Jin et~al.(2025)Jin, Xin, Xie, Li, Qi, Chen, Dai, Wu, and Haffari]{jin2025table}
Rihui Jin, Zheyu Xin, Xing Xie, Zuoyi Li, Guilin Qi, Yongrui Chen, Xinbang Dai, Tongtong Wu, and Gholamreza Haffari.
\newblock Table-r1: Self-supervised and reinforcement learning for program-based table reasoning in small language models.
\newblock \emph{arXiv preprint arXiv:2506.06137}, 2025.

\bibitem[Kakade and Langford(2002)]{kakade2002approximately}
Sham Kakade and John Langford.
\newblock Approximately optimal approximate reinforcement learning.
\newblock In \emph{Proceedings of the nineteenth international conference on machine learning}, pages 267--274, 2002.

\bibitem[Khalifa et~al.(2025)Khalifa, Agarwal, Logeswaran, Kim, Peng, Lee, Lee, and Wang]{khalifa2025processrewardmodelsthink}
Muhammad Khalifa, Rishabh Agarwal, Lajanugen Logeswaran, Jaekyeom Kim, Hao Peng, Moontae Lee, Honglak Lee, and Lu~Wang.
\newblock Process reward models that think, 2025.
\newblock URL \url{https://arxiv.org/abs/2504.16828}.

\bibitem[Lei et~al.(2025)Lei, Meng, Huang, Chen, Zhang, He, Zhao, and Liu]{lei2025reasoning}
Fangyu Lei, Jinxiang Meng, Yiming Huang, Tinghong Chen, Yun Zhang, Shizhu He, Jun Zhao, and Kang Liu.
\newblock Reasoning-table: Exploring reinforcement learning for table reasoning.
\newblock \emph{arXiv preprint arXiv:2506.01710}, 2025.

\bibitem[Li et~al.(2023{\natexlab{a}})Li, He, Yashar, Cui, Ge, Zhang, Fainman, Zhang, and Chaudhuri]{li2023table}
Peng Li, Yeye He, Dror Yashar, Weiwei Cui, Song Ge, Haidong Zhang, Danielle~Rifinski Fainman, Dongmei Zhang, and Surajit Chaudhuri.
\newblock Table-gpt: Table-tuned gpt for diverse table tasks.
\newblock \emph{arXiv preprint arXiv:2310.09263}, 2023{\natexlab{a}}.

\bibitem[Li and Li(2024)]{li2024process}
Wendi Li and Yixuan Li.
\newblock Process reward model with q-value rankings.
\newblock \emph{arXiv preprint arXiv:2410.11287}, 2024.

\bibitem[Li et~al.(2023{\natexlab{b}})Li, Xu, Chen, Huang, Li, Jiang, Li, Zhou, Zheng, and Shen]{DBLP:journals/corr/abs-2311-11268}
Yinghui Li, Zishan Xu, Shaoshen Chen, Haojing Huang, Yangning Li, Yong Jiang, Zhongli Li, Qingyu Zhou, Hai{-}Tao Zheng, and Ying Shen.
\newblock Towards real-world writing assistance: {A} chinese character checking benchmark with faked and misspelled characters.
\newblock \emph{CoRR}, abs/2311.11268, 2023{\natexlab{b}}.
\newblock \doi{10.48550/ARXIV.2311.11268}.
\newblock URL \url{https://doi.org/10.48550/arXiv.2311.11268}.

\bibitem[Li et~al.(2024)Li, Zhou, Luo, Ma, Li, Zheng, Hu, and Yu]{DBLP:journals/corr/abs-2402-11100}
Yinghui Li, Qingyu Zhou, Yuanzhen Luo, Shirong Ma, Yangning Li, Hai{-}Tao Zheng, Xuming Hu, and Philip~S. Yu.
\newblock When llms meet cunning questions: {A} fallacy understanding benchmark for large language models.
\newblock \emph{CoRR}, abs/2402.11100, 2024.
\newblock \doi{10.48550/ARXIV.2402.11100}.
\newblock URL \url{https://doi.org/10.48550/arXiv.2402.11100}.

\bibitem[Lightman et~al.(2024)Lightman, Kosaraju, Burda, Edwards, Baker, Lee, Leike, Schulman, Sutskever, and Cobbe]{lightman2024lets}
Hunter Lightman, Vineet Kosaraju, Yuri Burda, Harrison Edwards, Bowen Baker, Teddy Lee, Jan Leike, John Schulman, Ilya Sutskever, and Karl Cobbe.
\newblock Let's verify step by step.
\newblock In \emph{The Twelfth International Conference on Learning Representations}, 2024.
\newblock URL \url{https://openreview.net/forum?id=v8L0pN6EOi}.

\bibitem[Liu et~al.(2025)Liu, Gao, Zhao, Zhang, Li, Qi, Ouyang, and Zhou]{liu2025can}
Runze Liu, Junqi Gao, Jian Zhao, Kaiyan Zhang, Xiu Li, Biqing Qi, Wanli Ouyang, and Bowen Zhou.
\newblock Can 1b llm surpass 405b llm? rethinking compute-optimal test-time scaling.
\newblock \emph{arXiv preprint arXiv:2502.06703}, 2025.

\bibitem[Lu et~al.(2025)Lu, Chen, Liu, Thapa, Boen, and Zou]{lu2025octotoolsagenticframeworkextensible}
Pan Lu, Bowen Chen, Sheng Liu, Rahul Thapa, Joseph Boen, and James Zou.
\newblock Octotools: An agentic framework with extensible tools for complex reasoning, 2025.
\newblock URL \url{https://arxiv.org/abs/2502.11271}.

\bibitem[Maxwell-Jia(2024)]{maxwelljia2024aime24}
Maxwell-Jia.
\newblock {AIME 2024} dataset.
\newblock \url{https://huggingface.co/datasets/Maxwell-Jia/AIME_2024}, 2024.
\newblock Accessed: 2025-05-15.

\bibitem[Mohammadjafari et~al.(2024)Mohammadjafari, Maida, and Gottumukkala]{mohammadjafari2024natural}
Ali Mohammadjafari, Anthony~S Maida, and Raju Gottumukkala.
\newblock From natural language to sql: Review of llm-based text-to-sql systems.
\newblock \emph{arXiv preprint arXiv:2410.01066}, 2024.

\bibitem[Muennighoff et~al.(2025)Muennighoff, Yang, Shi, Li, Fei-Fei, Hajishirzi, Zettlemoyer, Liang, Candès, and Hashimoto]{muennighoff2025s1simpletesttimescaling}
Niklas Muennighoff, Zitong Yang, Weijia Shi, Xiang~Lisa Li, Li~Fei-Fei, Hannaneh Hajishirzi, Luke Zettlemoyer, Percy Liang, Emmanuel Candès, and Tatsunori Hashimoto.
\newblock s1: Simple test-time scaling, 2025.
\newblock URL \url{https://arxiv.org/abs/2501.19393}.

\bibitem[Paranjape et~al.(2023)Paranjape, Lundberg, Singh, Hajishirzi, Zettlemoyer, and Ribeiro]{paranjape2023art}
Bhargavi Paranjape, Scott Lundberg, Sameer Singh, Hannaneh Hajishirzi, Luke Zettlemoyer, and Marco~Tulio Ribeiro.
\newblock Art: Automatic multi-step reasoning and tool-use for large language models, 2023.
\newblock URL \url{https://arxiv.org/abs/2303.09014}.

\bibitem[Parikh et~al.(2020)Parikh, Wang, Gehrmann, Faruqui, Dhingra, Yang, and Das]{parikh2020totto}
Ankur Parikh, Xuezhi Wang, Sebastian Gehrmann, Manaal Faruqui, Bhuwan Dhingra, Diyi Yang, and Dipanjan Das.
\newblock Totto: A controlled table-to-text generation dataset.
\newblock In \emph{EMNLP 2020}, pages 1173--1186, 2020.

\bibitem[Pasupat and Liang(2015{\natexlab{a}})]{pasupat2015compositional}
Panupong Pasupat and Percy Liang.
\newblock Compositional semantic parsing on semi-structured tables.
\newblock In Chengqing Zong and Michael Strube, editors, \emph{Proceedings of the 53rd Annual Meeting of the Association for Computational Linguistics and the 7th International Joint Conference on Natural Language Processing (Volume 1: Long Papers)}, pages 1470--1480, Beijing, China, July 2015{\natexlab{a}}. Association for Computational Linguistics.
\newblock \doi{10.3115/v1/P15-1142}.
\newblock URL \url{https://aclanthology.org/P15-1142/}.

\bibitem[Pasupat and Liang(2015{\natexlab{b}})]{wikitq}
Panupong Pasupat and Percy Liang.
\newblock Compositional semantic parsing on semi-structured tables.
\newblock In \emph{Proceedings of the 53rd Annual Meeting of the Association for Computational Linguistics and the 7th International Joint Conference on Natural Language Processing of the Asian Federation of Natural Language Processing, {ACL} 2015, July 26-31, 2015, Beijing, China, Volume 1: Long Papers}, 2015{\natexlab{b}}.

\bibitem[Patil et~al.(2024)Patil, Zhang, Wang, and Gonzalez]{patil2024gorilla}
Shishir~G Patil, Tianjun Zhang, Xin Wang, and Joseph~E Gonzalez.
\newblock Gorilla: Large language model connected with massive apis.
\newblock \emph{Advances in Neural Information Processing Systems}, 37:\penalty0 126544--126565, 2024.

\bibitem[Qian et~al.(2025)Qian, Acikgoz, He, Wang, Chen, Hakkani-Tür, Tur, and Ji]{qian2025toolrlrewardtoollearning}
Cheng Qian, Emre~Can Acikgoz, Qi~He, Hongru Wang, Xiusi Chen, Dilek Hakkani-Tür, Gokhan Tur, and Heng Ji.
\newblock Toolrl: Reward is all tool learning needs, 2025.
\newblock URL \url{https://arxiv.org/abs/2504.13958}.

\bibitem[Qu et~al.(2025)Qu, Dai, Wei, Cai, Wang, Yin, Xu, and Wen]{Qu_2025}
Changle Qu, Sunhao Dai, Xiaochi Wei, Hengyi Cai, Shuaiqiang Wang, Dawei Yin, Jun Xu, and Ji-rong Wen.
\newblock Tool learning with large language models: a survey.
\newblock \emph{Frontiers of Computer Science}, 19\penalty0 (8), January 2025.
\newblock ISSN 2095-2236.
\newblock \doi{10.1007/s11704-024-40678-2}.
\newblock URL \url{http://dx.doi.org/10.1007/s11704-024-40678-2}.

\bibitem[Rein et~al.(2023)Rein, Hou, Stickland, Petty, Pang, Dirani, Michael, and Bowman]{rein2023gpqagraduatelevelgoogleproofqa}
David Rein, Betty~Li Hou, Asa~Cooper Stickland, Jackson Petty, Richard~Yuanzhe Pang, Julien Dirani, Julian Michael, and Samuel~R. Bowman.
\newblock Gpqa: A graduate-level google-proof q\&a benchmark, 2023.
\newblock URL \url{https://arxiv.org/abs/2311.12022}.

\bibitem[Schick et~al.(2023)Schick, Dwivedi-Yu, Dessi, Raileanu, Lomeli, Hambro, Zettlemoyer, Cancedda, and Scialom]{schick2023toolformer}
Timo Schick, Jane Dwivedi-Yu, Roberto Dessi, Roberta Raileanu, Maria Lomeli, Eric Hambro, Luke Zettlemoyer, Nicola Cancedda, and Thomas Scialom.
\newblock Toolformer: Language models can teach themselves to use tools.
\newblock In \emph{Thirty-seventh Conference on Neural Information Processing Systems}, 2023.
\newblock URL \url{https://openreview.net/forum?id=Yacmpz84TH}.

\bibitem[Schulman et~al.(2017)Schulman, Wolski, Dhariwal, Radford, and Klimov]{ppo}
John Schulman, Filip Wolski, Prafulla Dhariwal, Alec Radford, and Oleg Klimov.
\newblock Proximal policy optimization algorithms.
\newblock \emph{arXiv preprint arXiv:1707.06347}, 2017.

\bibitem[Seo et~al.(2025)Seo, Kwon, and Lee]{seo2025mt}
Kwangwook Seo, Donguk Kwon, and Dongha Lee.
\newblock Mt-raig: Novel benchmark and evaluation framework for retrieval-augmented insight generation over multiple tables.
\newblock \emph{arXiv preprint arXiv:2502.11735}, 2025.

\bibitem[Setlur et~al.(2024)Setlur, Nagpal, Fisch, Geng, Eisenstein, Agarwal, Agarwal, Berant, and Kumar]{setlur2024rewarding}
Amrith Setlur, Chirag Nagpal, Adam Fisch, Xinyang Geng, Jacob Eisenstein, Rishabh Agarwal, Alekh Agarwal, Jonathan Berant, and Aviral Kumar.
\newblock Rewarding progress: Scaling automated process verifiers for llm reasoning.
\newblock \emph{arXiv preprint arXiv:2410.08146}, 2024.

\bibitem[Shao et~al.(2024)Shao, Wang, Zhu, Xu, Song, Bi, Zhang, Zhang, Li, Wu, et~al.]{grpo}
Zhihong Shao, Peiyi Wang, Qihao Zhu, Runxin Xu, Junxiao Song, Xiao Bi, Haowei Zhang, Mingchuan Zhang, YK~Li, Y~Wu, et~al.
\newblock Deepseekmath: Pushing the limits of mathematical reasoning in open language models, 2024.
\newblock \emph{URL https://arxiv. org/abs/2402.03300}, 2024.

\bibitem[Shen(2024)]{shen2024llmtoolssurvey}
Zhuocheng Shen.
\newblock Llm with tools: A survey, 2024.
\newblock URL \url{https://arxiv.org/abs/2409.18807}.

\bibitem[Sheng et~al.(2024)Sheng, Zhang, Ye, Wu, Zhang, Zhang, Peng, Lin, and Wu]{sheng2024hybridflow}
Guangming Sheng, Chi Zhang, Zilingfeng Ye, Xibin Wu, Wang Zhang, Ru~Zhang, Yanghua Peng, Haibin Lin, and Chuan Wu.
\newblock Hybridflow: A flexible and efficient rlhf framework.
\newblock \emph{arXiv preprint arXiv: 2409.19256}, 2024.

\bibitem[Snell et~al.(2024)Snell, Lee, Xu, and Kumar]{snell2024scaling}
Charlie Snell, Jaehoon Lee, Kelvin Xu, and Aviral Kumar.
\newblock Scaling llm test-time compute optimally can be more effective than scaling model parameters.
\newblock \emph{arXiv preprint arXiv:2408.03314}, 2024.

\bibitem[Song et~al.(2023)Song, Xiong, Zhu, Wu, Qian, Song, Huang, Li, Wang, Yao, et~al.]{song2023restgpt}
Yifan Song, Weimin Xiong, Dawei Zhu, Wenhao Wu, Han Qian, Mingbo Song, Hailiang Huang, Cheng Li, Ke~Wang, Rong Yao, et~al.
\newblock Restgpt: Connecting large language models with real-world restful apis.
\newblock \emph{arXiv preprint arXiv:2306.06624}, 2023.

\bibitem[Su et~al.(2024)Su, Wang, Ye, Zhou, Zhang, Zhu, Wang, Xu, Chen, Li, et~al.]{tablegpt2}
Aofeng Su, Aowen Wang, Chao Ye, Chen Zhou, Ga~Zhang, Guangcheng Zhu, Haobo Wang, Haokai Xu, Hao Chen, Haoze Li, et~al.
\newblock Tablegpt2: A large multimodal model with tabular data integration.
\newblock \emph{arXiv preprint arXiv:2411.02059}, 2024.

\bibitem[Sui et~al.(2023)Sui, Zou, Zhou, He, Du, Han, and Zhang]{sui2023tap4llm}
Yuan Sui, Jiaru Zou, Mengyu Zhou, Xinyi He, Lun Du, Shi Han, and Dongmei Zhang.
\newblock Tap4llm: Table provider on sampling, augmenting, and packing semi-structured data for large language model reasoning.
\newblock \emph{arXiv preprint arXiv:2312.09039}, 2023.

\bibitem[Sui et~al.(2024)Sui, Zhou, Zhou, Han, and Zhang]{table_meet_llm}
Yuan Sui, Mengyu Zhou, Mingjie Zhou, Shi Han, and Dongmei Zhang.
\newblock Table meets llm: Can large language models understand structured table data? a benchmark and empirical study.
\newblock In \emph{Proceedings of the 17th ACM International Conference on Web Search and Data Mining}, pages 645--654, 2024.

\bibitem[Tang et~al.(2020)Tang, Fan, Li, Tu, Du, Li, Madden, and Ouzzani]{tang2020rpt}
Nan Tang, Ju~Fan, Fangyi Li, Jianhong Tu, Xiaoyong Du, Guoliang Li, Sam Madden, and Mourad Ouzzani.
\newblock Rpt: relational pre-trained transformer is almost all you need towards democratizing data preparation.
\newblock \emph{arXiv preprint arXiv:2012.02469}, 2020.

\bibitem[Uesato et~al.(2022)Uesato, Kushman, Kumar, Song, Siegel, Wang, Creswell, Irving, and Higgins]{uesato2022solvingmathwordproblems}
Jonathan Uesato, Nate Kushman, Ramana Kumar, Francis Song, Noah Siegel, Lisa Wang, Antonia Creswell, Geoffrey Irving, and Irina Higgins.
\newblock Solving math word problems with process- and outcome-based feedback, 2022.
\newblock URL \url{https://arxiv.org/abs/2211.14275}.

\bibitem[Vakulenko and Savenkov(2017)]{vakulenko2017tableqa}
Svitlana Vakulenko and Vadim Savenkov.
\newblock Tableqa: Question answering on tabular data.
\newblock \emph{arXiv preprint arXiv:1705.06504}, 2017.

\bibitem[Wang et~al.(2024{\natexlab{a}})Wang, Fang, Wan, Wen, Zhu, Liu, Gong, Song, Chen, Ni, et~al.]{wang2024openr}
Jun Wang, Meng Fang, Ziyu Wan, Muning Wen, Jiachen Zhu, Anjie Liu, Ziqin Gong, Yan Song, Lei Chen, Lionel~M Ni, et~al.
\newblock Openr: An open source framework for advanced reasoning with large language models.
\newblock \emph{arXiv preprint arXiv:2410.09671}, 2024{\natexlab{a}}.

\bibitem[Wang et~al.(2024{\natexlab{b}})Wang, Li, Shao, Xu, Dai, Li, Chen, Wu, and Sui]{wang2024mathshepherdverifyreinforcellms}
Peiyi Wang, Lei Li, Zhihong Shao, R.~X. Xu, Damai Dai, Yifei Li, Deli Chen, Y.~Wu, and Zhifang Sui.
\newblock Math-shepherd: Verify and reinforce llms step-by-step without human annotations, 2024{\natexlab{b}}.
\newblock URL \url{https://arxiv.org/abs/2312.08935}.

\bibitem[Wang et~al.(2024{\natexlab{c}})Wang, Zhang, Li, Eisenschlos, Perot, Wang, Miculicich, Fujii, Shang, Lee, et~al.]{chain_of_table}
Zilong Wang, Hao Zhang, Chun-Liang Li, Julian~Martin Eisenschlos, Vincent Perot, Zifeng Wang, Lesly Miculicich, Yasuhisa Fujii, Jingbo Shang, Chen-Yu Lee, et~al.
\newblock Chain-of-table: Evolving tables in the reasoning chain for table understanding.
\newblock \emph{arXiv preprint arXiv:2401.04398}, 2024{\natexlab{c}}.

\bibitem[Wu et~al.(2025{\natexlab{a}})Wu, Yang, Li, Ji, Okumura, and Zhang]{wummqa}
Jian Wu, Linyi Yang, Dongyuan Li, Yuliang Ji, Manabu Okumura, and Yue Zhang.
\newblock Mmqa: Evaluating llms with multi-table multi-hop complex questions.
\newblock In \emph{The Thirteenth International Conference on Learning Representations}, 2025{\natexlab{a}}.

\bibitem[Wu et~al.(2024)Wu, Yang, Chai, Zhang, Liu, Du, Liang, Shu, Cheng, Sun, et~al.]{tablebench}
Xianjie Wu, Jian Yang, Linzheng Chai, Ge~Zhang, Jiaheng Liu, Xinrun Du, Di~Liang, Daixin Shu, Xianfu Cheng, Tianzhen Sun, et~al.
\newblock Tablebench: A comprehensive and complex benchmark for table question answering.
\newblock \emph{arXiv preprint arXiv:2408.09174}, 2024.

\bibitem[Wu et~al.(2025{\natexlab{b}})Wu, Yang, Liu, Wu, Pan, Zhang, Zhao, Song, Li, and Li]{wu2025table}
Zhenhe Wu, Jian Yang, Jiaheng Liu, Xianjie Wu, Changzai Pan, Jie Zhang, Yu~Zhao, Shuangyong Song, Yongxiang Li, and Zhoujun Li.
\newblock Table-r1: Region-based reinforcement learning for table understanding.
\newblock \emph{arXiv preprint arXiv:2505.12415}, 2025{\natexlab{b}}.

\bibitem[Yang et~al.(2024)Yang, Zhang, Hui, Gao, Yu, Li, Liu, Tu, Zhou, Lin, et~al.]{qwen2.5mathprm72b}
An~Yang, Beichen Zhang, Binyuan Hui, Bofei Gao, Bowen Yu, Chengpeng Li, Dayiheng Liu, Jianhong Tu, Jingren Zhou, Junyang Lin, et~al.
\newblock Qwen2.5-math technical report: Toward mathematical expert model via self-improvement.
\newblock \emph{arXiv preprint arXiv:2409.12122}, 2024.

\bibitem[Yang et~al.(2025{\natexlab{a}})Yang, Li, Yang, Zhang, Hui, Zheng, Yu, Gao, Huang, Lv, et~al.]{qwen3}
An~Yang, Anfeng Li, Baosong Yang, Beichen Zhang, Binyuan Hui, Bo~Zheng, Bowen Yu, Chang Gao, Chengen Huang, Chenxu Lv, et~al.
\newblock Qwen3 technical report, 2025{\natexlab{a}}.
\newblock URL \url{https://arxiv.org/abs/2505.09388}.

\bibitem[Yang et~al.(2020)Yang, Ma, Zhang, Li, and Zhou]{soft_template}
Jian Yang, Shuming Ma, Dongdong Zhang, Zhoujun Li, and Ming Zhou.
\newblock Improving neural machine translation with soft template prediction.
\newblock In Dan Jurafsky, Joyce Chai, Natalie Schluter, and Joel~R. Tetreault, editors, \emph{Proceedings of the 58th Annual Meeting of the Association for Computational Linguistics, {ACL} 2020, Online, July 5-10, 2020}, pages 5979--5989. Association for Computational Linguistics, 2020.
\newblock \doi{10.18653/V1/2020.ACL-MAIN.531}.
\newblock URL \url{https://doi.org/10.18653/v1/2020.acl-main.531}.

\bibitem[Yang et~al.(2025{\natexlab{b}})Yang, Chen, Cohan, and Zhao]{yang2025tabler1inferencetimescalingtable}
Zheyuan Yang, Lyuhao Chen, Arman Cohan, and Yilun Zhao.
\newblock Table-r1: Inference-time scaling for table reasoning, 2025{\natexlab{b}}.
\newblock URL \url{https://arxiv.org/abs/2505.23621}.

\bibitem[Yao et~al.(2023)Yao, Zhao, Yu, Du, Shafran, Narasimhan, and Cao]{yao2022react}
Shunyu Yao, Jeffrey Zhao, Dian Yu, Nan Du, Izhak Shafran, Karthik Narasimhan, and Yuan Cao.
\newblock React: Synergizing reasoning and acting in language models, 2023.
\newblock URL \url{https://arxiv.org/abs/2210.03629}.

\bibitem[Ye et~al.(2025)Ye, Huang, Xiao, Chern, Xia, and Liu]{ye2025limo}
Yixin Ye, Zhen Huang, Yang Xiao, Ethan Chern, Shijie Xia, and Pengfei Liu.
\newblock Limo: Less is more for reasoning.
\newblock \emph{arXiv preprint arXiv:2502.03387}, 2025.

\bibitem[Yin et~al.(2020)Yin, Neubig, Yih, and Riedel]{yin2020tabert}
Pengcheng Yin, Graham Neubig, Wen-tau Yih, and Sebastian Riedel.
\newblock Tabert: Pretraining for joint understanding of textual and tabular data.
\newblock \emph{arXiv preprint arXiv:2005.08314}, 2020.

\bibitem[Yu et~al.(2018)Yu, Zhang, Yang, Yasunaga, Wang, Li, Ma, Li, Yao, Roman, et~al.]{yu2018spider}
Tao Yu, Rui Zhang, Kai Yang, Michihiro Yasunaga, Dongxu Wang, Zifan Li, James Ma, Irene Li, Qingning Yao, Shanelle Roman, et~al.
\newblock Spider: A large-scale human-labeled dataset for complex and cross-domain semantic parsing and text-to-sql task.
\newblock In \emph{EMNLP 2018}, pages 3911--3921, 2018.

\bibitem[Zhang et~al.(2024{\natexlab{a}})Zhang, Zhoubian, Hu, Yue, Dong, and Tang]{zhang2024rest}
Dan Zhang, Sining Zhoubian, Ziniu Hu, Yisong Yue, Yuxiao Dong, and Jie Tang.
\newblock Rest-mcts*: Llm self-training via process reward guided tree search.
\newblock \emph{arXiv preprint arXiv:2406.03816}, 2024{\natexlab{a}}.

\bibitem[Zhang et~al.(2023)Zhang, Yue, Li, and Sun]{zhang2023tablellama}
Tianshu Zhang, Xiang Yue, Yifei Li, and Huan Sun.
\newblock Tablellama: Towards open large generalist models for tables.
\newblock \emph{arXiv preprint arXiv:2311.09206}, 2023.

\bibitem[Zhang et~al.(2024{\natexlab{b}})Zhang, Wang, Dou, Zhu, and Che]{zhang2024surveytablereasoninglarge}
Xuanliang Zhang, Dingzirui Wang, Longxu Dou, Qingfu Zhu, and Wanxiang Che.
\newblock A survey of table reasoning with large language models, 2024{\natexlab{b}}.
\newblock URL \url{https://arxiv.org/abs/2402.08259}.

\bibitem[Zhang et~al.(2025{\natexlab{a}})Zhang, Wang, Dou, Zhu, and Che]{zhang2025survey}
Xuanliang Zhang, Dingzirui Wang, Longxu Dou, Qingfu Zhu, and Wanxiang Che.
\newblock A survey of table reasoning with large language models.
\newblock \emph{Frontiers of Computer Science}, 19\penalty0 (9):\penalty0 199348, 2025{\natexlab{a}}.

\bibitem[Zhang et~al.(2025{\natexlab{b}})Zhang, Zheng, Wu, Zhang, Lin, Yu, Liu, Zhou, and Lin]{zhang2025lessons}
Zhenru Zhang, Chujie Zheng, Yangzhen Wu, Beichen Zhang, Runji Lin, Bowen Yu, Dayiheng Liu, Jingren Zhou, and Junyang Lin.
\newblock The lessons of developing process reward models in mathematical reasoning.
\newblock \emph{arXiv preprint arXiv:2501.07301}, 2025{\natexlab{b}}.

\bibitem[Zhao et~al.(2025)Zhao, Liu, Zhang, Zhou, Gao, Li, Lyu, Qian, Qi, Li, and Zhou]{zhao2025genprmscalingtesttimecompute}
Jian Zhao, Runze Liu, Kaiyan Zhang, Zhimu Zhou, Junqi Gao, Dong Li, Jiafei Lyu, Zhouyi Qian, Biqing Qi, Xiu Li, and Bowen Zhou.
\newblock Genprm: Scaling test-time compute of process reward models via generative reasoning, 2025.
\newblock URL \url{https://arxiv.org/abs/2504.00891}.

\bibitem[Zheng et~al.(2023)Zheng, Chiang, Sheng, Zhuang, Wu, Zhuang, Lin, Li, Li, Xing, et~al.]{zheng2023judging}
Lianmin Zheng, Wei-Lin Chiang, Ying Sheng, Siyuan Zhuang, Zhanghao Wu, Yonghao Zhuang, Zi~Lin, Zhuohan Li, Dacheng Li, Eric Xing, et~al.
\newblock Judging llm-as-a-judge with mt-bench and chatbot arena.
\newblock \emph{Advances in Neural Information Processing Systems}, 36:\penalty0 46595--46623, 2023.

\bibitem[Zheng et~al.(2024)Zheng, Zhang, Zhang, Ye, Luo, Feng, and Ma]{zheng2024llamafactory}
Yaowei Zheng, Richong Zhang, Junhao Zhang, Yanhan Ye, Zheyan Luo, Zhangchi Feng, and Yongqiang Ma.
\newblock Llamafactory: Unified efficient fine-tuning of 100+ language models.
\newblock In \emph{Proceedings of the 62nd Annual Meeting of the Association for Computational Linguistics (Volume 3: System Demonstrations)}, Bangkok, Thailand, 2024. Association for Computational Linguistics.
\newblock URL \url{http://arxiv.org/abs/2403.13372}.

\bibitem[Zhong et~al.(2025)Zhong, Shen, Li, Gao, Lu, Chen, Zhang, Zhou, Gu, and Zou]{zhong2025comprehensive}
Jialun Zhong, Wei Shen, Yanzeng Li, Songyang Gao, Hua Lu, Yicheng Chen, Yang Zhang, Wei Zhou, Jinjie Gu, and Lei Zou.
\newblock A comprehensive survey of reward models: Taxonomy, applications, challenges, and future.
\newblock \emph{arXiv preprint arXiv:2504.12328}, 2025.

\bibitem[Zhong et~al.(2017)Zhong, Xiong, and Socher]{zhong2017seq2sql}
Victor Zhong, Caiming Xiong, and Richard Socher.
\newblock Seq2sql: Generating structured queries from natural language using reinforcement learning.
\newblock \emph{CoRR}, 2017.

\bibitem[Zou et~al.(2025{\natexlab{a}})Zou, Fu, Chen, He, Li, Zhu, Han, and He]{zou2025gtrgraphtableragcrosstablequestion}
Jiaru Zou, Dongqi Fu, Sirui Chen, Xinrui He, Zihao Li, Yada Zhu, Jiawei Han, and Jingrui He.
\newblock Gtr: Graph-table-rag for cross-table question answering, 2025{\natexlab{a}}.
\newblock URL \url{https://arxiv.org/abs/2504.01346}.

\bibitem[Zou et~al.(2025{\natexlab{b}})Zou, Yang, Gu, Qiu, Shen, He, and Wang]{zou2025reasonflux}
Jiaru Zou, Ling Yang, Jingwen Gu, Jiahao Qiu, Ke~Shen, Jingrui He, and Mengdi Wang.
\newblock Reasonflux-prm: Trajectory-aware prms for long chain-of-thought reasoning in llms.
\newblock \emph{arXiv preprint arXiv:2506.18896}, 2025{\natexlab{b}}.

\end{thebibliography}
\bibliographystyle{plainnat}

\clearpage

\appendix
\renewcommand{\contentsname}{\Large Table of Contents}
{\hypersetup{linkcolor=black}
\tableofcontents
}

\addtocontents{toc}{\protect\setcounter{tocdepth}{2}}

\newpage 
\textbf{\Large Appendix}

\section{Additional Related Work}
\label{app:additional_related_work}

\paragraph{Table Question Answering.} The evolution of Table Question Answering (Table QA) research~\citep{jin2022survey} has been propelled by the creation of sophisticated evaluation resources that facilitate semantic parsing capabilities~\citep{soft_template,DBLP:journals/corr/abs-2311-11268, DBLP:journals/corr/abs-2402-11100}.
Foundational works, including WTQ~\citep{pasupat2015compositional} and TabFact~\citep{chen2019tabfact}, established initial evaluation paradigms through Wikipedia-derived HTML table QA pairs. 
Structured supervision has also been explored in alternative benchmarks such as WikiSQL~\citep{zhong2017seq2sql} and Spider~\citep{yu2018spider}, where logical expressions serve as explicit annotations to encourage systematic reasoning. More recent studies, such as MultiTableQA \citep{tablebench,zou2025gtrgraphtableragcrosstablequestion}, MT-RAIG \citep{seo2025mt}, and MMQA \citep{wummqa} has shifted towards multi-hop reasoning.

\paragraph{PRMs for Test-time Scaling.} 
Process Reward Models (PRMs) \citep{lightman2024lets, uesato2022solvingmathwordproblems, zhang2024rest} deliver fine-grained, step-level feedback to guide model reasoning, assigning intermediate rewards to individual reasoning steps rather than only judging final answers \citep{guan2025rstar, chen2025rmr1rewardmodelingreasoning}. Prominent PRMs, including Math-Shepherd \citep{wang2024mathshepherdverifyreinforcellms}, Skywork-PRM \citep{skyworkopeno12024}, and the Qwen2.5-Math-PRM family \citep{zhang2025lessons}, are trained using a mix of human annotations and synthesized supervision to score model-generated solution steps across domains such as math \citep{maxwelljia2024aime24}, scientific reasoning \citep{rein2023gpqagraduatelevelgoogleproofqa}, and programming \citep{he-etal-2024-cocost}; more recently, Think-PRM proposes a generative verifier to produce long-chain CoT evaluations \citep{khalifa2025processrewardmodelsthink}. PRMs have been incorporated into training-time optimization as reward signals via step-verified online RL and verifier-guided self-training \citep{li2024process, guan2025rstar, cui2025process}, and into inference-time scaling by coupling step-level scoring with search/decoding strategies \citep{zhao2025genprmscalingtesttimecompute, khalifa2025processrewardmodelsthink}, including beam search, reward-guided tree search, and Best-of-N sampling. 

\paragraph{Discriminative vs. Generative PRM.} In general, PRMs can be categorized as discriminative and generative evaluators \citep{zhong2025comprehensive}. A \textbf{discriminative PRM} treats verification as classification, directly predicting the correctness of each reasoning step with a scalar score. It is typically trained on step-level labels using cross-entropy loss, making it heavily reliant on step-level reward annotations. A \textbf{generative PRM} instead frames verification as conditional generation. It is trained with the standard language modeling objective to first generate rationales and then verify each step’s correctness via a judgment token (e.g., [correct, incorrect]).

\paragraph{Tool Integration with LLMs.} 
Recent research has explored augmenting LLMs with external tools to extend their reasoning and problem-solving abilities \citep{shen2024llmtoolssurvey, Qu_2025}. Early approaches rely on predefined APIs or plugins \citep{schick2023toolformer, paranjape2023art, yao2022react}, enabling models to call external functions during inference. Additional methods \citep{song2023restgpt, patil2024gorilla, lu2025octotoolsagenticframeworkextensible} emphasize training LLMs to select, invoke, and compose tools dynamically. More recent frameworks, such as ReTool \citep{feng2025retoolreinforcementlearningstrategic} and ToolRL \citep{qian2025toolrlrewardtoollearning}, extend tool learning with RL-based policy optimization \citep{ppo,grpo,dong2025agentic}, enabling reward-driven tool integration where LLMs iteratively refine their tool-use strategies. While prior work integrates tools into LLMs to improve model generation capabilities, integrating tools in the reward modeling process for reliable verification has remained underexplored, with existing PRMs still functioning primarily as text-only verifiers. Our work bridges this gap by proposing a tool-augmented PRM that leverages external tool executions for more reliable and precise reward supervision, enabling stronger verification of reasoning trajectories.

\clearpage
\section{Detailed Error Analysis}
\label{app:error_analysis}

\begin{figure}[!h]
    \centering
    \includegraphics[width=0.8\linewidth]{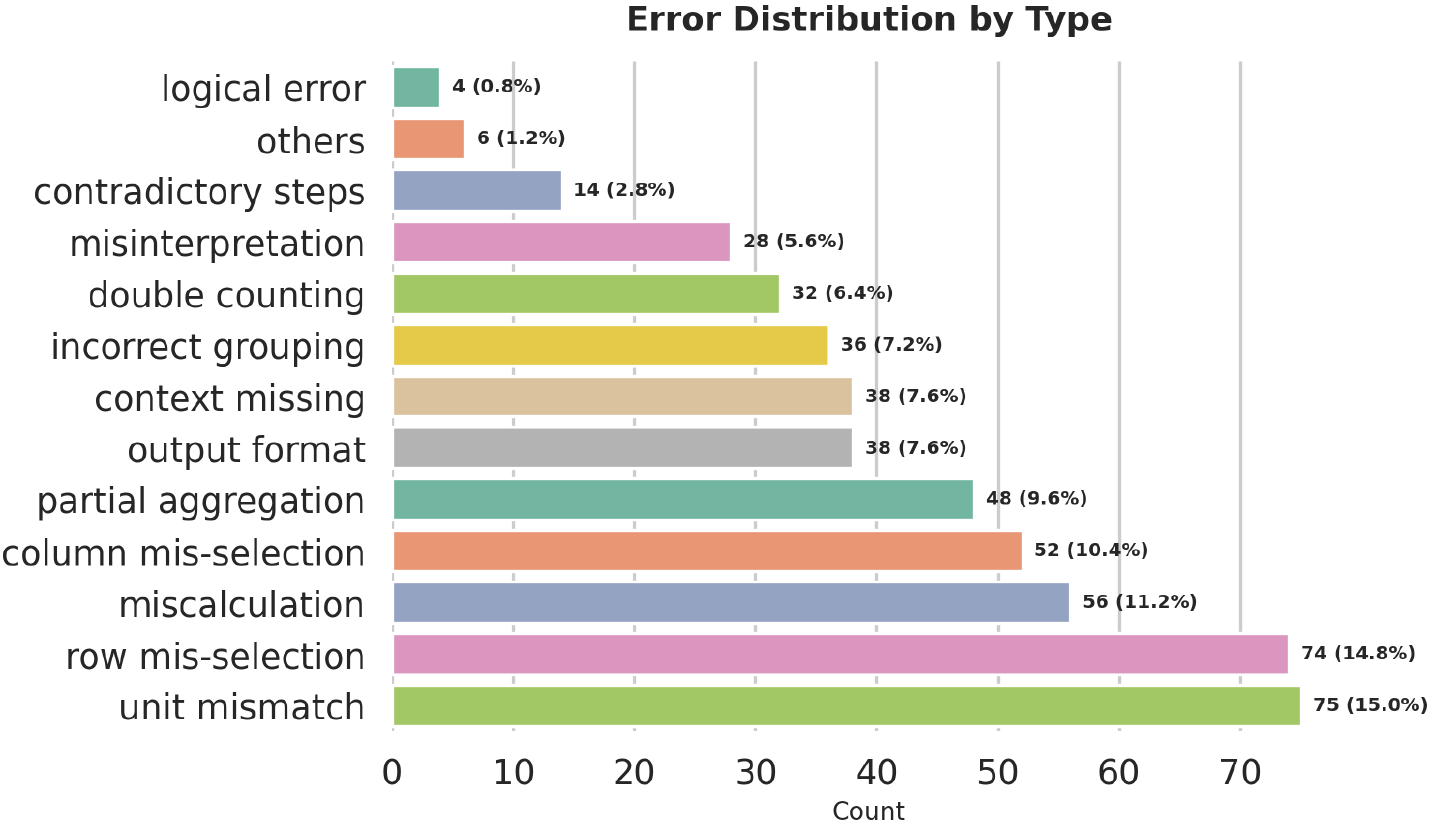}
    \caption{Error distribution over 500 incorrect LRM responses after Best-of-N. The errors are grouped into 13 predefined types, with the majority arising from table retrieval and schema interaction.}
    \label{fig:error_type}
\end{figure}

In Section \ref{sec:analyses}, we perform a fine-grained error analysis on 500 erroneous responses sampled after Best-of-$N$ selection with Qwen2.5-Math-PRM-72B, to better understand the limitations of LRMs and PRMs. Each response is inspected and categorized by human experts into 13 predefined error types, covering both reasoning and table-specific mistakes. Figure~\ref{fig:error_type} illustrates the overall error distribution.

\paragraph{Error Type Distribution.}
The most frequent errors are \textit{unit mismatch} (15.0\%), \textit{row mis-selection} (14.8\%), and \textit{miscalculation} (11.2\%). Other common issues include \textit{column mis-selection} (10.4\%), \textit{partial aggregation} (9.6\%), and missing or incomplete \textit{context} (7.6\%). Less frequent but still notable categories include \textit{output format errors}, \textit{incorrect grouping}, \textit{double counting}, \textit{misinterpretation}, and \textit{contradictory steps}. A small portion of errors is grouped under \textit{others} and \textit{logical errors}. This diverse distribution highlights that model failures are not restricted to arithmetic slips but extend to schema understanding and structural reasoning.

\paragraph{Mapping to Reasoning-Step Categories.}
To reveal deeper patterns, we align the 13 error types with four reasoning-step categories reflecting the typical flow of LRMs:
\begin{itemize}[leftmargin=*]
\item \textbf{Table Retrieval Step}: Includes row/column mis-selection, unit mismatch, and partial aggregation. These account for 47.7\% of total errors, indicating difficulty in locating and extracting the correct table region.
\item \textbf{Schema Interaction Step}: Covers miscalculation, grouping mistakes, double counting, and misinterpretation of table semantics. This represents 34.3\% of errors, reflecting challenges in reasoning over structured contents once retrieved.
\item \textbf{Inner-Thinking Step}: Logical errors or contradictory reasoning steps independent of table contents. These contribute 12.0\% of total errors, suggesting LRMs remain relatively competent in pure logical chains compared to table-centric operations.
\item \textbf{Others}: Errors arising from context omission or improper output formatting.
\end{itemize}

\paragraph{Key Findings.}
The analysis reveals that most model weaknesses lie in table-related operations, including table retrieval and schema interaction, rather than general logical reasoning. PRMs, when supervising such steps, face greater challenges since they must not only validate the correctness of reasoning but also verify alignment between the retrieved sub-table and the query.

\newpage

\section{\our{} Data Curation Pipline}
\label{app:data_curation}

We design a large-scale data curation pipeline that simulates real-world scenarios of PRM tool use and step verification at scale. As illustrated in Figure \ref{fig:data_pipline}, there are three main stages:

\textbf{Reasoning Trajectory Generation.} 
We begin by collecting trajectory responses from expert LRMs (e.g., DeepSeek-R1 \citep{deepseek_r1} and Claude-Opus-4.1 \citep{anthropic2025_claude_opus_sonnet_syscard}) on table-based questions drawn from diverse benchmarks, including TableInstruct~\citep{tablebench}, HybridQA~\citep{chen2020hybridqa}, ToTTo~\citep{parikh2020totto}, and WikiTQ~\citep{wikitq}.

We generate multiple model responses per query, capturing both correct and incorrect reasoning patterns. We then adopt a dual-verification procedure \citep{feng2025retool}, where both human annotators and expert LLMs are employed to examine and filter out low-quality or incomplete CoT data. Through this, we receive a high-quality set of LRMs' output responses $\mathcal{T_{\text{pool}}}$ for subsequent data labeling.

\textbf{Verification Synthesis \& Reward Assignment.} Our next step is to provide step-level verification rationales and assign PRM step-reward labels for each candidate response in $\mathcal{T}_{\text{pool}}$. To this end, we first identify the table retrieval and schema interaction steps within each response in $\mathcal{T}_{\text{pool}}$:
\begin{itemize}[leftmargin=*]

    \item 
\textit{Table retrieval steps} - We first extract the retrieved sub-table from each step. Then we apply LLM-as-a-judge to evaluate whether retrieved contents are accurate and provide complete rationales for the judgment. We assign step-level table reward $r_{i,\text{tab}} \in \{-1,1\}$ (in Eq.~\ref{eqa:prm_reward}) as the meaning of $\{\text{incorrect}, \text{correct}\}$ based on the correctness of the retrieval. This reward supervision explicitly trains PRMs to recognize if the retrieved sub-table aligns with the input query, addressing the limitation in \textit{Takeaway 1}.

    \item
\textit{Schema interaction steps} - We collect the sub-table retrieved from the preceding table retrieval step and use it as a table prefix. If the retrieval is incorrect, we manually replace it with the correct sub-table corresponding to the query. We then prepend this table prefix to the verification rationale generated by LLM-as-a-judge. Finally, we assign the PRM’s step-level table reward $r_{i,\text{tab}} \in \{-1,1\}$ as the meaning of $\{\text{incorrect}, \text{correct}\}$ based on the correctness of the schema interaction. By explicitly attaching the retrieved sub-table to each schema interaction step, we mitigate the dependencies issue noted in \textit{Takeaway~2}.

    \item 
\textit{Other steps without table operations involved} - We directly query an expert LLM (DeepSeek-R1) to generate verification rationales. We assign the PRM’s step-level reasoning reward $r_{i,\text{rea}} \in \{-1,1\}$ as the meaning of $\{\text{incorrect}, \text{correct}\}$ based on the correctness of the reasoning.
\end{itemize}

\textbf{Tool Use Synthesis.} 
To help PRMs learn to leverage tools for more accurate verification, we augment the collected verification rationales by incorporating tool invocation, execution, and feedback into the verification steps. Specifically, whenever the model’s inner reasoning involves a calculation or table lookup operation, we replace it with the corresponding tool call and its execution result. We primarily employ two types of tools:

\begin{itemize}[leftmargin=*]

    \item 
\textit{Computation tools -} Applying Python or SQL code snippets for arithmetic or aggregation operations. E.g., if a step verifies the sum of a table column, we replace the model’s manual calculation with a code snippet that executes the summation and returns the result.

    \item 
\textit{Table lookup tools -} Locating and extracting specific rows, columns, or cells from the table. E.g., if a step requires referencing a sub-table cell value during the verification, we replace the model’s self-extraction with an explicit lookup tool call that retrieves the corresponding entry.
\end{itemize}

By integrating verification processes with code snippets and real-time interpreter feedback, we construct roughly 60k data for \our{}'s verification reasoning and tool usage.

\newpage

\newpage

\section{Proof of Theorem \ref{theorem:improvement}}
\label{app:proof}

\paragraph{Notational conventions.}
We use $\bs_i$ for a state, $a_i$ for an action, $\pi$ for the current policy, and $\pi'$ for the updated policy.
The advantage is defined as 
\begin{equation}
   A^\pi(\bs_i,a_i)=Q^\pi(\bs_i,a_i)-V^\pi(\bs_i).
\end{equation}
The PRM signal at a step is the overall process reward, defined in Eq.\ref{eqa:prm_reward_}.
For a fixed $\bs_i$, we write
$\E_\pi[\cdot] \equiv \E_{a_i\sim \pi(\cdot\mid \bs_i)}[\cdot]$,
$\Var_\pi[\cdot] \equiv \Var_{a_i\sim \pi(\cdot\mid \bs_i)}[\cdot]$,
and $\Cov_\pi(r_{i,\text{rea}}(\bs_i,a_i),r_{i,\text{rea}}(\bs_i,a_i)) \equiv \Cov_{a_i\sim \pi(\cdot\mid \bs_i)}(r_{i,\text{rea}}(\bs_i,a_i),r_{i,\text{rea}}(\bs_i,a_i))$
Expectations over states use the subscript explicitly, e.g., $\E_{\bs_i\sim \rho}[\cdot]$.
We use $d_\rho^{\pi'}$ for the state distribution induced by the policy $\pi'$ starting from the initial distribution $\rho$.
Finally, $X \gtrsim Y$ means there exists a universal constant $c>0$, independent of $(\pi,\pi',\bs_i)$, such that $X \ge c\,Y$.

We start the proof by introducing two standard lemmas that will be used repeatedly; both are well-known results in the RL literature, and we omit their proofs here for brevity.

\begin{lemma}[\textbf{Performance Difference Lemma (PDL)}]
\label{lem:pdl}
For any pair of policies $\pi$ and $\pi'$ defined over the same Markov decision process with initial state distribution $\rho$, the following identity holds:
\begin{align*}
  \E_{\bs_i\sim \rho}\!\left[V^{\pi'}(\bs_i)-V^{\pi}(\bs_i)\right]
  \;=\;
  \E_{\bs_i\sim d_{\rho}^{\pi'}} \E_{a_i\sim \pi'(\cdot\mid \bs_i)}\!\big[A^{\pi}(\bs_i,a_i)\big].
\end{align*}
\end{lemma}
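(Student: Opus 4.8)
The plan is to prove the identity by the classical telescoping (Bellman-difference) argument of Kakade and Langford, specialized to our finite-horizon trajectory MDP in which transitions are deterministic and the only reward is the terminal correctness indicator. First I would fix an initial state $\bs_i$ and consider a rollout $\bs_i, a_i, \bs_{i+1}, \dots, \bs_L, a_L$ generated by $\pi'$, noting that appending the sampled action to the context deterministically yields $\bs_{j+1} = (\bs_j, a_j)$, and that $V^{\pi'}(\bs_i) = \E_{\pi'}[\mathbbm{1}_{a_L \text{ correct}}]$. The crux is the pathwise telescoping identity
\begin{equation}
  \mathbbm{1}_{a_L \text{ correct}} - V^\pi(\bs_i)
  = \big(\mathbbm{1}_{a_L \text{ correct}} - V^\pi(\bs_L)\big)
  + \sum_{j=i}^{L-1}\big(V^\pi(\bs_{j+1}) - V^\pi(\bs_j)\big),
\end{equation}
whose right-hand side collapses to $\mathbbm{1}_{a_L \text{ correct}} - V^\pi(\bs_i)$ because the summed $V^\pi$ terms cancel in consecutive pairs.

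The second step is to recognize each summand as a base-policy advantage. Since transitions are deterministic, $Q^\pi(\bs_j, a_j) = V^\pi(\bs_{j+1})$ for $j < L$ — the $\pi$-continuation from $\bs_{j+1} = (\bs_j,a_j)$ is exactly what defines $Q^\pi(\bs_j,a_j)$ — while at the terminal step $Q^\pi(\bs_L, a_L) = \mathbbm{1}_{a_L \text{ correct}}$. Substituting these and using $A^\pi = Q^\pi - V^\pi$ rewrites the display as $\sum_{j=i}^{L} A^\pi(\bs_j, a_j)$ along every path. Taking the expectation over the $\pi'$-rollout and then over $\bs_i \sim \rho$ yields
\begin{equation}
  \E_{\bs_i \sim \rho}\big[V^{\pi'}(\bs_i) - V^\pi(\bs_i)\big]
  = \E_{\bs_i \sim \rho}\,\E_{\pi'}\!\left[\sum_{j=i}^{L} A^\pi(\bs_j, a_j)\right].
\end{equation}

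Finally I would convert the trajectory-step sum into a single state expectation by introducing the (normalized) occupancy measure $d_\rho^{\pi'}$ recording the frequency with which $\pi'$, launched from $\rho$, visits each state along its rollout; pushing the inner action expectation through $\pi'(\cdot\mid\bs)$ then produces $\E_{\bs\sim d_\rho^{\pi'}}\E_{a\sim\pi'(\cdot\mid\bs)}[A^\pi(\bs,a)]$, which is precisely the claimed right-hand side. I expect the only genuine obstacle to be bookkeeping at the horizon: one must confirm the terminal boundary term is handled consistently (equivalently, adopt the convention $V^\pi(\bs_{L+1}) := 0$) and fix the normalization of $d_\rho^{\pi'}$ so that, unlike the discounted infinite-horizon version in which a $1/(1-\gamma)$ factor survives, no stray horizon-length constant appears. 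The remaining manipulations are routine applications of the tower property of conditional expectation.
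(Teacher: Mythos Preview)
Your proposal is correct and follows the classical Kakade--Langford telescoping argument; the paper itself does not supply a proof of this lemma but simply cites Lemma~6.1 of \citep{kakade2002approximately}, which is exactly the argument you sketch. Your specialization to the finite-horizon, deterministic-transition setting of the paper (with terminal reward $\mathbbm{1}_{a_L\text{ correct}}$) is appropriate, and your flagged bookkeeping concern about the normalization of $d_\rho^{\pi'}$ is the only genuine subtlety.
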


See proof of Lemma 6.1 in \citep{kakade2002approximately}.

\begin{lemma}[\textbf{Natural policy gradient (NPG) update form}]
\label{lem:npg-update}
Fix a step size $\gamma>0$.
If the NPG update is guided by the signal $A^\pi(\bs_i,a_i)+r_i(\bs_i,a_i)$, then
\begin{equation}
\begin{aligned}
\pi'(a_i\mid \bs_i)
&\propto\;
\pi(a_i\mid \bs_i)\,
\exp\!\Big(\gamma\big(A^\pi(\bs_i,a_i)+r_i(\bs_i,a_i)\big)\Big),
\\
Z^\pi(\bs_i)
&\triangleq\;
\sum_{a_i}{\pi(a_i\mid \bs_i)}\!
\left[\exp\!\Big(\gamma\big(A^\pi(\bs_i,a_i)+r_i(\bs_i,a_i)\big)\Big)\right],\\
&\quad\text{so that}\quad
\frac{\pi'(a_i\mid \bs_i)}{\pi(a_i\mid \bs_i)}=\frac{\exp\!\Big(\gamma\big(A^\pi(\bs_i,a_i)+r_i(\bs_i,a_i)\big)\Big)}{Z^\pi(\bs_i)}.
\end{aligned}
\end{equation}
\end{lemma}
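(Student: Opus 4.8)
The plan is to prove Lemma~\ref{lem:npg-update} by adopting the standard softmax (tabular) policy parameterization, under which the natural policy gradient (NPG) update is known to coincide with a mirror-ascent step using the negative-entropy (equivalently, KL) mirror map. My primary route is the proximal/mirror-ascent characterization, which yields the claimed exponentiated form directly with the step size $\gamma$ appearing exactly, and I would then confirm consistency with the classical Fisher-matrix computation.

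First I would record the mirror-ascent characterization of NPG: starting from $\pi$ and using the guiding signal $A^\pi(\bs_i,a_i)+r_i(\bs_i,a_i)$ (with $r_i$ as in Eq.~\ref{eqa:prm_reward_}), the update solves, state by state, the concave program
\begin{equation*}
\pi'(\cdot\mid\bs_i)=\argmax_{p\in\Delta}\Big\{\,\gamma\sum_{a_i}p(a_i)\big(A^\pi(\bs_i,a_i)+r_i(\bs_i,a_i)\big)-\KL\!\big(p\,\|\,\pi(\cdot\mid\bs_i)\big)\,\Big\}.
\end{equation*}
I would then solve this program on the simplex $\Delta$ by Lagrange multipliers: introducing a multiplier for the normalization constraint $\sum_{a_i}p(a_i)=1$ and setting the gradient in $p(a_i)$ to zero gives $\log p(a_i)=\log\pi(a_i\mid\bs_i)+\gamma\big(A^\pi(\bs_i,a_i)+r_i(\bs_i,a_i)\big)+\mathrm{const}$. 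Exponentiating and enforcing normalization yields $p(a_i)\propto\pi(a_i\mid\bs_i)\exp\!\big(\gamma(A^\pi+r_i)\big)$ with partition function exactly $Z^\pi(\bs_i)=\sum_{a_i}\pi(a_i\mid\bs_i)\exp(\gamma(A^\pi+r_i))$, which reproduces the displayed ratio $\pi'/\pi$ and completes the proof. This route is self-contained and avoids any matrix inversion.

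To confirm this is genuinely the NPG direction (not merely a mirror step), I would next carry out the direct Fisher computation as a check. Writing $\pi_\theta(a_i\mid\bs_i)\propto\exp(\theta_{\bs_i,a_i})$, the softmax score is the centered indicator $\nabla_{\theta_{\bs_i,a_i}}\log\pi_\theta(a_i'\mid\bs_i')=\mathbbm{1}[\bs_i'=\bs_i]\big(\mathbbm{1}[a_i'=a_i]-\pi_\theta(a_i\mid\bs_i)\big)$, so the Fisher information matrix $F(\theta)=\E[\nabla\log\pi\,\nabla\log\pi^\top]$ is block diagonal across states. Solving the natural-gradient system $F(\theta)\,w=\nabla_\theta J(\theta)$ via the pseudoinverse, and invoking the compatible-function-approximation identity with the augmented signal, gives a per-coordinate solution $w_{\bs_i,a_i}\propto A^\pi(\bs_i,a_i)+r_i(\bs_i,a_i)$ (the positive horizon-dependent constant and state-visitation weights being absorbed into the step size and proportionality). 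Pushing the additive parameter step $\theta_{\bs_i,a_i}\mapsto\theta_{\bs_i,a_i}+\gamma w_{\bs_i,a_i}$ through the softmax map then recovers exactly the multiplicative update above.

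The main obstacle is the degeneracy of the softmax Fisher matrix in this second route: shifting $\theta_{\bs_i,\cdot}$ by any state-dependent constant leaves $\pi_\theta$ unchanged, so $F(\theta)$ is singular and the natural gradient is determined only up to its null space. I would handle this by working with the Moore--Penrose pseudoinverse and observing that the null-space ambiguity is precisely a per-state additive constant, which the softmax normalization absorbs; hence the induced policy $\pi'$ is well defined and independent of the representative chosen. Because the mirror-ascent argument optimizes directly over $\Delta$ and never touches $F(\theta)^{\dagger}$, it sidesteps this degeneracy entirely, which is why I would present it as the main proof and treat the Fisher computation as a confirming remark.
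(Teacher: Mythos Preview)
Your proposal is correct. The mirror-ascent derivation you give is the standard and cleanest way to obtain the multiplicative NPG update under the softmax parameterization, and your treatment of the Fisher singularity via the pseudoinverse (with the per-state additive ambiguity absorbed by normalization) is accurate.

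For comparison: the paper does not actually prove this lemma at all; it simply writes ``See proof of Lemma~F.2 in \citep{setlur2024rewarding}'' and moves on. So your write-up is strictly more informative than what appears in the paper. If anything, your two-route presentation (proximal argument as the main proof, Fisher computation as a sanity check) is more thorough than what the cited reference typically provides, and the observation that the mirror-ascent route sidesteps the $F(\theta)^{\dagger}$ degeneracy entirely is a nice expository point. There is nothing to fix.
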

See proof of Lemma F.2 in \citep{setlur2024rewarding}. Next, we restate Theorem \ref{theorem:improvement} in the following proposition.

\begin{proposition}[\textbf{Full-strength policy improvement lower bound}]
\label{prop:full-strength}
Let $\pi'$ be the NPG update in Lemma~\ref{lem:npg-update}. We can have:
\begin{equation}
\begin{aligned}
\label{eq:full-strength}
\E_{\bs_i\sim \rho}\!\left[V^{\pi'}(\bs_i)-V^{\pi}(\bs_i)\right]
\;\gtrsim\;
\E_{\bs_i\sim \rho}\!\Big[
\underbrace{\Var_{\pi}\!\big[r_{i,\mathrm{rea}}(\bs_i,a_i)\big]}_{\text{distinguishability (reasoning reward)}}
+\underbrace{\Var_{\pi}\!\big[r_{i,\mathrm{tab}}(\bs_i,a_i)\big]}_{\text{distinguishability (table reward)}}\\
+ 2\,\underbrace{\Cov_{\pi}\!\big(r_{i,\text{rea}}(\bs_i,a_i), \,r_{i,\text{tab}}(\bs_i,a_i)\big)}_{\text{alignment between $r_{i, \text{rea}}$ and $r_{i, \text{tab}}$}}
+ \underbrace{\E_{\pi}\!\big[r_{i,\text{tab}}(\bs_i,a_i)\,A^\pi(\bs_i,a_i)\big]}_{\text{alignment of $r_{i,\text{tab}}$ with $A^\pi$}
}
+ \underbrace{\E_{\pi}\!\big[r_{i,\text{rea}}(\bs_i,a_i)\,A^\pi(\bs_i,a_i)\big]}_{\text{alignment of $r_{i,\text{rea}}$ with $A^\pi$}
}
\Big].
\end{aligned}
\end{equation}
\end{proposition}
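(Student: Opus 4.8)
The plan is to combine the Performance Difference Lemma (Lemma~\ref{lem:pdl}) with the explicit NPG update form (Lemma~\ref{lem:npg-update}), and then expand the resulting expectation using the decomposition $r_i = r_{i,\mathrm{rea}} + r_{i,\mathrm{tab}}$ (only one summand is nonzero per step, but additivity still holds). First I would apply Lemma~\ref{lem:pdl} to write the left-hand side as $\E_{\bs_i\sim d_\rho^{\pi'}}\E_{a_i\sim\pi'}[A^\pi(\bs_i,a_i)]$. The standard move here is to lower-bound the visitation distribution $d_\rho^{\pi'}$ by $\rho$ up to a multiplicative constant (the ``universal constant $c$'' absorbing the $(1-\gamma)$-type factors), which is exactly why the statement uses $\gtrsim$ rather than equality; this reduces the problem to lower-bounding $\E_{\bs_i\sim\rho}\E_{a_i\sim\pi'}[A^\pi(\bs_i,a_i)]$ pointwise in $\bs_i$.

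Next, for a fixed state $\bs_i$, I would rewrite $\E_{a_i\sim\pi'}[A^\pi(\bs_i,a_i)]$ using the importance ratio from Lemma~\ref{lem:npg-update}, namely $\E_{a_i\sim\pi}\!\big[\tfrac{\pi'(a_i\mid\bs_i)}{\pi(a_i\mid\bs_i)}A^\pi(\bs_i,a_i)\big] = \E_\pi\!\big[\tfrac{\exp(\gamma(A^\pi + r_i))}{Z^\pi(\bs_i)}A^\pi\big]$. The key analytic step is to lower-bound $\exp(\gamma x)\ge 1 + \gamma x$ (and control $Z^\pi(\bs_i)$ from above by a constant, using boundedness of $A^\pi$ and $r_i$ — both lie in bounded ranges since $Q^\pi,V^\pi\in[0,1]$ and $r_i\in\{-1,1\}$). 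After substituting the linearization and using $\E_\pi[A^\pi(\bs_i,a_i)]=0$ (since $V^\pi(\bs_i)=\E_\pi[Q^\pi]$), the leading term becomes proportional to $\E_\pi[(A^\pi + r_i)A^\pi] = \Var_\pi(A^\pi) + \E_\pi[r_i A^\pi]$, and I would then expand $\E_\pi[r_i A^\pi]$ and $\Var_\pi$ in terms of the two reward components. Using $r_i = r_{i,\mathrm{rea}} + r_{i,\mathrm{tab}}$ and bilinearity, $\E_\pi[r_i A^\pi] = \E_\pi[r_{i,\mathrm{rea}}A^\pi] + \E_\pi[r_{i,\mathrm{tab}}A^\pi]$, which yields the two alignment-with-$A^\pi$ terms. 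To recover the two variance terms plus the cross-covariance term, I would instead keep one more order in the expansion (or, more cleanly, note that $\Var_\pi(A^\pi + r_i) \ge$ a constant times the bound we want after dropping the nonnegative $\Var_\pi(A^\pi)$ and using $\Var_\pi(r_i) = \Var_\pi(r_{i,\mathrm{rea}}) + \Var_\pi(r_{i,\mathrm{tab}}) + 2\Cov_\pi(r_{i,\mathrm{rea}},r_{i,\mathrm{tab}})$); since $\Var_\pi(A^\pi)\ge 0$ it can be discarded from the lower bound, giving exactly the five terms in \eqref{eq:full-strength}.

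Finally I would reassemble: take the pointwise-in-$\bs_i$ lower bound, integrate against $\rho$, and fold all the absolute constants ($\gamma$, the bound on $Z^\pi$, the visitation-ratio constant) into the single universal $c$ hidden by $\gtrsim$. One must double-check that each discarded term is genuinely nonnegative ($\Var_\pi(A^\pi)\ge 0$ is immediate; any second-order remainder from $e^{\gamma x}\ge 1+\gamma x$ is also nonnegative since that inequality holds with a nonnegative slack), so that dropping them only weakens the bound in the correct direction.

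\textbf{Main obstacle.} The delicate part is the bookkeeping that turns the single inequality $e^{\gamma x}\ge 1+\gamma x$ into a clean quadratic lower bound with the right grouping of terms — in particular making sure the cross term $2\Cov_\pi(r_{i,\mathrm{rea}},r_{i,\mathrm{tab}})$ emerges with the correct sign and is not accidentally cancelled, and verifying that $Z^\pi(\bs_i)$ is bounded above uniformly so that dividing by it costs only a constant. Everything else (Lemma~\ref{lem:pdl}, the visitation-ratio comparison, bilinearity of variance/covariance) is routine.
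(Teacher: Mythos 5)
Your opening moves (PDL, weakening $d_\rho^{\pi'}$ to $\rho$ at the cost of a constant, using the explicit NPG ratio from Lemma~\ref{lem:npg-update}) match the paper, but the central step of your argument does not produce the bound and contains a sign error. You propose to lower-bound $\E_{a_i\sim\pi'}[A^\pi(\bs_i,a_i)]=\E_{\pi}\big[\tfrac{\exp(\gamma(A^\pi+r_i))}{Z^\pi(\bs_i)}A^\pi\big]$ by replacing $\exp(\gamma x)$ with $1+\gamma x$ termwise. That replacement is only valid on the event $A^\pi(\bs_i,a_i)\ge 0$: where $A^\pi<0$, multiplying $e^{\gamma x}\ge 1+\gamma x$ by $A^\pi$ reverses the inequality, so the ``nonnegative slack'' you invoke to justify discarding the remainder is in fact of the wrong sign on part of the action space, and the claimed pointwise lower bound fails. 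More importantly, even granting the linearization, your leading term is $\gamma\big(\Var_\pi(A^\pi)+\E_\pi[r_iA^\pi]\big)$ --- the tilt gets paired against $A^\pi$, not against $r_i$ --- and $\Var_\pi(A^\pi)$ cannot be traded for $\Var_\pi(r_i)$: take a state where all actions are equally good ($A^\pi\equiv 0$) but the PRM scores differ; then your quantity is $0$ while the claimed bound's right-hand side is $\Var_\pi(r_i)>0$. Your patch via $\Var_\pi(A^\pi+r_i)$ does not repair this, because that quantity never arises from your expansion (and a genuine second-order term of the exponential would give a third-moment expression $\E_\pi[(A^\pi+r_i)^2A^\pi]$, again not $\Var_\pi(r_i)$, and the second-order lower bound $e^x\ge 1+x+x^2/2$ is itself false for $x<0$).

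The paper avoids this by never expanding $\E_{\pi'}[A^\pi]$ with $A^\pi$ as the integrand. Instead it inverts the NPG relation to write $A^\pi=\tfrac{1}{\gamma}\log\tfrac{\pi'}{\pi}-r_i+\tfrac{1}{\gamma}\log Z^\pi$, averages under $\pi'$, and uses nonnegativity of the KL term together with Jensen on $\log Z^\pi$ (plus $\E_\pi[A^\pi]=0$) to reduce the problem to the inner product $\langle\pi'-\pi,\,r_i\rangle$; only then is the first-order tilt expansion applied, and because the tilt $\pi'-\pi\approx\gamma\,\pi\,(A^\pi+r_i-\text{const})$ is now paired against $r_i$ rather than $A^\pi$, the leading term is $\gamma\big(\Var_\pi(r_i)+\E_\pi[r_iA^\pi]\big)$, whose expansion via $r_i=r_{i,\text{rea}}+r_{i,\text{tab}}$ yields exactly the two variance terms, the $2\Cov_\pi$ cross term, and the two alignment terms of \eqref{eq:full-strength}. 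To fix your proof you would need to adopt this reduction (or some other device that makes $r_i$, not $A^\pi$, the test function against which the policy change is measured); the final bookkeeping with bilinearity of variance and covariance is then the same as what you wrote.
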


\begin{proof}[Proof of Proposition \ref{prop:full-strength}]
We now combine the performance difference lemma with the NPG update to derive a variance–alignment lower bound, while first retaining the covariance term between the reward components. By Lemma~\ref{lem:pdl}, we have
\begin{align}
\label{eq:pdl}
\E_{\bs_i\sim \rho}\!\big[V^{\pi'}(\bs_i)-V^{\pi}(\bs_i)\big]
\;=\;
\E_{\bs_i\sim d_{\rho}^{\pi'}} \E_{a_i\sim \pi'(\cdot\mid \bs_i)}\!\big[A^{\pi}(\bs_i,a_i)\big].
\end{align}

\textbf{Exponential tilting and a log-partition bound.} Let us define the log-partition at state $\bs_i$ by
\[
\log Z^\pi(\bs_i)
\;=\;
\log \E_{a_i\sim \pi(\cdot\mid \bs_i)}
\exp\!\Big(\gamma\big(A^\pi(\bs_i,a_i)+r_i(\bs_i,a_i)\big)\Big).
\]
From Lemma~\ref{lem:npg-update}, we have
\[
A^\pi(\bs_i,a_i)
=
\frac{1}{\gamma}\log\!\frac{\pi'(a_i\mid \bs_i)}{\pi(a_i\mid \bs_i)}
- r_i(\bs_i,a_i)
+ \frac{1}{\gamma}\log Z^\pi(\bs_i).
\]
Averaging over $a_i\sim \pi'(\cdot\mid \bs_i)$, using $\E_{\pi'}[\log\frac{\pi'}{\pi}]\ge 0$, Jensen’s inequality on $\log Z^\pi(\bs_i)$ and $\E_{\pi}[A^\pi(\bs_i,a_i)]=0$ gives
\begin{align}
\E_{a_i\sim \pi'(\cdot\mid \bs_i)}[A^\pi(\bs_i,a_i)]
\;\ge\;
-\,\E_{a_i\sim \pi'(\cdot\mid \bs_i)}[r_i(\bs_i,a_i)]
+ \E_{a_i\sim \pi(\cdot\mid \bs_i)}[r_i(\bs_i,a_i)].
\end{align}
Plugging this into \eqref{eq:pdl} yields the basic inner-product lower bound
\begin{align}
\label{eq:basic-ip}
\E_{\bs_i\sim \rho}\!\big[V^{\pi'}(\bs_i)-V^{\pi}(\bs_i)\big]
\;\ge\;
\E_{\bs_i\sim d_{\rho}^{\pi'}}\!
\left\langle \pi'(\cdot\mid \bs_i)-\pi(\cdot\mid \bs_i),\, r_i(\bs_i,\cdot)\right\rangle.
\end{align}

Using first-order expansion of the exponential tilt implies
\begin{align}
\label{eq:policy-move}
\left\langle \pi'(\cdot\mid \bs_i)-\pi(\cdot\mid \bs_i),\, r_i(\bs_i,\cdot)\right\rangle
\;\gtrsim\;
\left(
\Var_{\pi}\!\big[r_i(\bs_i,a_i)\big]
+\E_{\pi}\!\big[r_i(\bs_i,a_i)\,A^\pi(\bs_i,a_i)\big]
\right),
\end{align}
Combining \eqref{eq:basic-ip} and \eqref{eq:policy-move}, and weakening $d_{\rho}^{\pi'}$ to $\rho$ (componentwise monotonicity) gives
\begin{align}
\label{eq:var-plus-align}
\E_{\bs_i\sim \rho}\!\big[V^{\pi'}(\bs_i)-V^{\pi}(\bs_i)\big]
\;\gtrsim\;
\,\E_{\bs_i\sim \rho}\!\Big[
\Var_{\pi}\!\big[r_i(\bs_i,a_i)\big]
+\E_{\pi}\!\big[r_i(\bs_i,a_i)\,A^\pi(\bs_i,a_i)\big]
\Big].
\end{align}

\textbf{Variance decomposition with covariance.}
Next, using $r_i=r_{i,\text{rea}}+r_{i,\text{tab}}$, we have
\begin{align}
\Var_{\pi}\!\big[r_i(\bs_i,a_i)\big]
&= \Var_{\pi}\!\big[r_{i,\text{rea}}(\bs_i,a_i)\big]
 + \Var_{\pi}\!\big[r_{i,\text{tab}}(\bs_i,a_i)\big]
 + 2\,\Cov_{\pi}\!\big(r_{i,\text{rea}}(\bs_i,a_i),r_{i,\text{tab}}(\bs_i,a_i)\big).
\end{align}
Substituting into \eqref{eq:var-plus-align} complete our proof of Proposition \ref{prop:full-strength} (\eqref{eq:full-strength}).
\end{proof}

\paragraph{Covariance elimination under our reward design.}
By construction in our setup (see Section~\ref{sec:data_curation}), for each state–action pair $(\bs_i,a_i)$, the two components of the PRM signal, i.e., table reward and reasoning reward, are \emph{mutually exclusive}. Formally, we have 
\[
r_{i,\text{tab}}(\bs_i,a_i)\in\{-1,0,1\},\quad
r_{i,\text{rea}}(\bs_i,a_i)\in\{-1,0,1\},\quad
\text{and}\quad
r_{i,\text{tab}}(\bs_i,a_i)\,r_{i,\text{rea}}(\bs_i,a_i)=0.
\]
Policy-gradient updates are invariant to adding any per-state baseline, so we may center each component without loss, i.e., 
\[
\tilde r_{i,\text{rea}}(\bs_i,a_i)
\;=\; r_{i,\text{rea}}(\bs_i,a_i)-\E_{\pi}\!\big[r_{i,\text{rea}}(\bs_i,a_i)\big],
\qquad
\tilde r_{i,\text{tab}}(\bs_i,a_i)
\;=\; r_{i,\text{tab}}(\bs_i,a_i)-\E_{\pi}\!\big[r_{i,\text{tab}}(\bs_i,a_i)\big].
\]
Mutual exclusivity yields
$\E_{\pi}\!\big[\tilde r_{i,\text{rea}}(\bs_i,a_i)\,\tilde r_{i,\text{tab}}(\bs_i,a_i)\big]=0$,
hence
$\Cov_{\pi}\!\big(\tilde r_{i,\text{rea}},\tilde r_{i,\text{tab}}\big)=0$ and
\[
\Var_{\pi}\!\big[\tilde r_{i}(\bs_i,a_i)\big]
=
\Var_{\pi}\!\big[\tilde r_{i,\text{rea}}(\bs_i,a_i)\big]
+
\Var_{\pi}\!\big[\tilde r_{i,\text{tab}}(\bs_i,a_i)\big],
\quad
\tilde r_{i}\;\triangleq\;\tilde r_{i,\text{rea}}+\tilde r_{i,\text{tab}}.
\]
Plugging these centered quantities into the bounds of Proposition~\ref{prop:full-strength} (which is NPG-invariant under per-state centering) gives exactly Theorem \ref{theorem:improvement}'s inequality:
\begin{equation}
\begin{aligned}
\E_{\bs_i\sim \rho}\!\left[V^{\pi'}(\bs_i)-V^{\pi}(\bs_i)\right]
\; &\gtrsim\;
\E_{\bs_i\sim \rho}\!\Big[
\Var_{\pi}\!\big[r_{i,\text{rea}}(\bs_i,a_i)\big]
+\Var_{\pi}\!\big[r_{i,\text{tab}}(\bs_i,a_i)\big]
\\ & + \E_{\pi}\!\big[r_{i,\text{tab}}(\bs_i,a_i)\,A^\pi(\bs_i,a_i)\big]
+ \E_{\pi}\!\big[r_{i,\text{rea}}(\bs_i,a_i)\,A^\pi(\bs_i,a_i)\big]
\Big],
\end{aligned}
\end{equation}
which completes the proof of Theorem~\ref{theorem:improvement}.
\hfill$\square$

\vspace{10pt}
\begin{remark}
(i) Proposition~\ref{prop:full-strength} is strictly more general; Theorem~\ref{theorem:improvement} follows as a corollary under mutual exclusivity plus per-state centering (baseline invariance).
(ii) Mutual exclusivity alone yields $\E_{\pi}[r_{i,\text{rea}}\,r_{i,\text{tab}}]=0$, but per-state centering is what ensures $\Cov_{\pi}(r_{i,\text{rea}},r_{i,\text{tab}})=0$.
(iii) The alignment term necessarily uses the composite signal $r_i$ because the NPG step is guided by $A^\pi+r_i$.
\end{remark}

\newpage
\section{Experimental Setups}
\label{app:set_up}

\subsection{Policy Model Configurations}
\label{app:baseline_model}

In our experiments, we adopt an LRM DeepSeek-R1-Distill-Qwen-14B~\citep{deepseek_r1} as the downstream policy model. During inference, we configure the model with a temperature of 0.7, a maximum generation length of 16,384 tokens, and top-$p$ sampling with $p=0.95$. 
We evaluate the LRM on several inference-time scaling strategies:

\paragraph{Best-of-N (BoN).} 
The policy model generates $N$ candidate responses independently. A verifier (PRM) scores each response, and the final output is selected based on a voting or scoring method.

\paragraph{Beam Search.} 
Given beam width $N$ and branching factor $M$, the model generates $N$ initial steps. The verifier then selects the top $N/M$ continuations, and the model expands each with $M$ new candidates. This process repeats until termination, enabling guided exploration of high-quality reasoning paths.

\paragraph{Diverse Verifier Tree Search (DVTS).} 
DVTS is a variant of beam search where the search process is divided into multiple subtrees. Each subtree is explored independently using verifier-guided expansions, with candidates selected at every step based on PRM scores.

\paragraph{Majority Voting.} 
After generating multiple responses, the final answer is determined by simple majority over identical outputs, regardless of intermediate step scores. This method provides a baseline aggregation mechanism.

\paragraph{LLM-as-a-Judge.} 
Instead of relying solely on PRMs, a separate LLM is prompted to compare and evaluate candidate responses directly, selecting the most plausible or logically consistent output.

\subsection{Downstream Datasets}
\label{app:dataset}

\paragraph{TableBench \citep{tablebench}.} 
TableBench is a comprehensive benchmark specifically designed to evaluate the reasoning abilities of LLMs over tabular data. It consists of 3,681 unique tables drawn from diverse domains such as finance, sports, politics, and science, with each table containing on average $16.7$ rows and $6.7$ columns. The dataset emphasizes numerical reasoning, with over $65\%$ of table cells containing numerical values. TableBench questions are organized into four major categories: fact-checking, numerical reasoning, data analysis, further divided into 18 subcategories, yielding a total of 886 carefully annotated samples. Each question typically requires $6.3$ reasoning steps, making the dataset significantly more complex than prior TableQA corpora.

\paragraph{WikiTableQuestions (WTQ) \citep{wikitq}.}  
The WikiTableQuestions dataset introduces question answering over semi-structured HTML tables, aiming to test both compositional reasoning and domain generalization. It comprises $22{,}033$ natural language questions paired with $2{,}108$ Wikipedia tables, where the training and test tables are disjoint to ensure generalization to unseen schemas. The tables are semi-structured and heterogeneous, often containing multi-part cell values (e.g., ``Beijing, China'') that require normalization into multiple semantic types such as numbers or dates. Questions range from simple lookups to highly compositional queries involving comparison, aggregation, arithmetic, and superlatives. Each table contains at least $8$ rows and $5$ columns, and the question collection was conducted with quality control through multiple annotators.

\paragraph{MMQA \citep{wummqa}}  
MMQA is a large-scale benchmark for evaluating LLMs on multi-table and multi-hop question answering. The benchmark includes a total of $3{,}312$ relational tables across $138$ domains, where each instance consists of two or three interlinked tables. The dataset features $5{,}000$ multi-table samples, annotated with natural language questions, SQL queries, gold answers, and explicit primary/foreign key relations. To ensure annotation quality, foreign and primary keys were labeled by human experts with inter-annotator agreement exceeding $80\%$. MMQA questions span four main categories, including numerical, list, count, and select, with an average length of 77–85 tokens, reflecting their compositional complexity.

\subsection{\our{} Training Details}
\label{app:implementation}

We train \our{} using the off-the-shelf Qwen-3-8B model~\citep{qwen3} on our curated 60k dataset. For supervised fine-tuning, we adopt the LLaMA-Factory framework~\citep{zheng2024llamafactory}. The training setup uses a learning rate of $1\times10^{-5}$, a weight decay of $1\times10^{-4}$, a maximum sequence length of 20,000, and is run for 3 epochs. For the RL training stage, we adopt the VeRL framework~\citep{sheng2024hybridflow} to further optimize the SFT checkpoint via policy optimization. The model is trained with a batch size of 32, generating 8 samples per question as the group size, and is run for 3 epochs. During inference, we use the OpenR framework~\citep{wang2024openr} to deploy our trained \our{}-8B, which serves as a verifier to guide the downstream LRM under different test-time scaling strategies.

\section{Additoinal Experiments}
\label{app:add_exps}
\subsection{Ablation Study on \our{}}
\label{app:ablations}

\begin{table}[!t]
\centering
\begin{minipage}{0.49\linewidth}
    \centering
    \caption{Ablation on confidence calibration $\lambda_\text{cal}$.}
    \label{tab:ablation_param_1}
    \resizebox{0.75\linewidth}{!}{
    \begin{tabular}{lccc}
    \toprule
    N=32 & TB-NR & TB-FC & TB-DA \\
    \midrule
    0.3 &  76.8 & 80.9 & 33.1 \\
    0.5 &  77.3 & 81.3 & 33.6 \\
    0.8 & 78.1 & \textbf{82.0} & \textbf{34.3} \\
    1.0 & \textbf{78.5} & 81.4 & 33.8 \\
    \bottomrule
    \end{tabular}
    }
\end{minipage}
\hfill
\begin{minipage}{0.49\linewidth}
    \centering
    \caption{Ablation on tool-grounding $\lambda_\text{tool}$.}
    \label{tab:ablation_param_2}
    \resizebox{0.75\linewidth}{!}{
    \begin{tabular}{lccc}
    \toprule
    N=32 & TB-NR & TB-FC & TB-DA \\
    \midrule
    0.1 & 75.2 & 76.3 & 30.8 \\
    0.5 & 75.9 & 76.9 & 32.2 \\
    1.0 & \textbf{78.1} & \textbf{82.0} & 34.3 \\
    1.3 & 77.5 & 81.2 & \textbf{34.6} \\
    \bottomrule
    \end{tabular}
    }
\end{minipage}
\end{table}

\textbf{Ablations on $\lambda_\text{cal}$ and $\lambda_\text{tool}$.} In Eq.~\ref{eqa:rewar_design}, we use $\lambda_{\text{cal}}$ and $\lambda_{\text{tool}}$ as tunable coefficients to balance the contributions of the corresponding reward terms in GRPO. 
To examine their influence, we separately train our verifier model (initialized from the same SFT checkpoint) by varying $\lambda_{\text{cal}} \in \{0.3, 0.5, 0.8, 1.0\}$ and $\lambda_{\text{tool}} \in \{0.1, 0.5, 1.0, 1.5\}$ during RL, and then evaluate on TableBench with N = 32. As shown in Table~\ref{tab:ablation_param_1} and \ref{tab:ablation_param_2}, performance improves as $\lambda_{\text{cal}}$ increases, peaking at 0.8–1.0. For $\lambda_{\text{tool}}$, accuracy rises steadily and is strongest around 1.0–1.3.
These results empirically confirm the effectiveness of confidence calibration and tool-grounding in enhancing TTS.

\subsection{Performance Gain of \our{} with Increasing Number of Responses}

\begin{wrapfigure}{r}{0.38\textwidth}
\vspace{-15pt}
  \centering
    \includegraphics[width=0.95\linewidth]{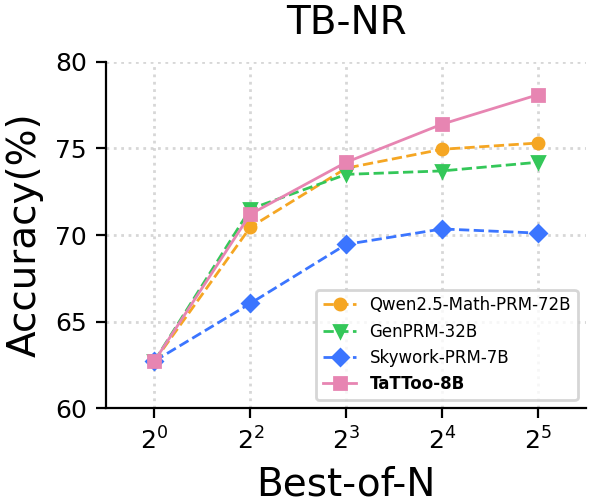}
    \caption{Performance of \our{} and baseline PRMs on TB-NR under Best-of-N test time scaling strategy. 
    }
    \label{fig:tb-nr}
    \vspace{-15pt}
\end{wrapfigure}
Figure~\ref{fig:tb-nr} presents the best-of-$N$ performance on TB-NR. We observe that baseline PRMs such as Qwen2.5-Math-PRM-72B and GenPRM-32B quickly saturate beyond $N{=}16$, achieving only marginal improvements at larger $N$. Skywork-PRM-7B shows even weaker scalability, plateauing below 71\%. 
In contrast, \our{} continues to improve steadily as $N$ increases, reaching 78.3\% at $N{=}32$, the highest among all models despite having significantly fewer parameters (8B). These results highlight the scalability advantage of \our{}, demonstrating its ability to leverage larger response pools more effectively than existing PRMs. 

In addition, we observe that the performance gap widens as $N$ grows: at $N{=}2$, the difference between \our{} and GenPRM-32B is less than 1\%, whereas by $N{=}32$ the gap exceeds 6\%. This suggests that \our{} is not only competitive at small-scale selection but also excels under test-time scaling, where robust discrimination among diverse reasoning trajectories becomes increasingly important. This trend highlights the robustness of our tool-augmented PRM in supervising multiple reasoning candidates.

\newpage 
\subsection{Case Study}
\label{app:case_study}

\begin{figure}[!t]
\begin{tcolorbox}[
  title={Case Study on \our{}},
  colframe=black!60,             %
  colback=white,              %
  colbacktitle=black!70,      %
  coltitle=white,             %
  fonttitle=\bfseries,
  width=\textwidth,
  left=0mm,
  right=0mm,
  boxrule=1pt,                %
  borderline={0.8pt}{0pt}{black}, %
  top=1mm,
  bottom=1mm,
  arc=1mm                     %
]

\footnotesize

\underline{\textbf{\normalsize(i) Before RL Training}} 
\vspace{5pt}

Okay, I need to verify step by step for the provided answer and determine if each step is correct or not.\\ 

\textbf{[Step 0 Evaluation]}\\
For step 0, let me first check whether it retrieves the correct information from the input table …
…

\textbf{[Step 3 Evaluation]}\\
Looking at step 3 and the table data provided, I need to verify whether the summarization calculation on the column "capacity in persons / hour” is correct. Let me add each part together:\\

350 + 500 + 970 + 10000 + 
2000 + 2800 + 650 + 650 + 
1000 + 1000 + 720 + 820 + 1000 = \textcolor{red}{16,920} 

\textcolor{red}{\textit{(wrong PRM calculation)}}\\

Since the provided step 3 gives a final calculation result of 22,460, which is not equal to 16,920. Step 3 is incorrect due to the wrong summarization on the overall carrying capacity. \\

The step is \textbf{\texttt{\textbackslash boxed\{-1(incorrect)\}}} \textcolor{red}{(\ding{55} \textit{Wrong Step Reward})}
\vspace{10pt}

\hrule height 1pt
\vspace{10pt}
\underline{\textbf{\normalsize(ii) After RL Training}} 
\vspace{5pt}

Alright, I am given a complete table, a question, and a corresponding answer. I need to check the correctness of each step of the answer. Let me verify step by step:\\

\textbf{[Step 0 Evaluation]}\\
I start by checking if Step 0 pulls the proper information from the table ...

...

\textbf{[Step 3 Evaluation]}\\
To verify Step 3, I need to check the summarization (sum) over the "capacity in persons / hour” column. I will write Python code to verify this using the input table.\\ 

\textbf{\textcolor{violet!60!black}{\texttt{Code Sandbox}}} \vspace{-5pt}
\begin{tcolorbox}[colback=violet!5!white, colframe=violet!60!black, boxrule=0.5pt, arc=.5mm, left=1mm, right=1mm, top=0mm, bottom=0mm, left skip=5pt, right skip=5pt,] \begin{lstlisting}[language=Python, basicstyle=\ttfamily\footnotesize, showstringspaces=false]
table = {
    "columns": ["name or route", "capacity in persons / hour"],
    "data": [
        ["betten betten village", 350],
        ["betten village bettmeralp", 500],
        # ...
        ["trainerlift", 820],
        ["lager 1", 1000],
        # ...
    ],
}
capacity_index = table["columns"].index("capacity in persons / hour")
total_capacity = sum(row[capacity_index] for row in table["data"])

print("Total capacity in persons/hour: ", total_capacity)
\end{lstlisting}
\end{tcolorbox}

\textbf{\textcolor{cyan!60!black}{\texttt{Code Interpreter}}}
\vspace{-5pt}
\begin{tcolorbox}[
  colback=cyan!5!white,
  colframe=cyan!60!black,
  boxrule=0.5pt,
  arc=.5mm,
  left=0.5mm,   %
  right=0.5mm,  %
  top=2mm,    %
  bottom=2mm, %
  left skip=5pt, 
  right skip=5pt,
]
\texttt{Total capacity in persons/hour: 22460}
\end{tcolorbox}

Since the computed total of 22,460 aligns with the result in step 3, this step is correct.\\

The step is \textbf{\texttt{\textbackslash boxed\{1(correct)\}}} \textcolor{impr}{(\ding{51} \textit{Correct Step Reward})}
\end{tcolorbox}
\caption{Case study on \our{} before and after RL training. Applying RL training enhances our PRM’s dynamic utilization of tool calls, which in turn provides more reliable supervision over the input reasoning trajectories of LRMs.}
\label{fig:case_study}
\end{figure}

As shown in Figure \ref{fig:case_study}, we present a case study on \our{} illustrating the difference between the verification processes at the two training stages on a specific instance in Figure~\ref{fig:case_study} (Appendix \ref{app:case_study}). 
When facing the same step (Step~3), the SFT-stage relies on inner text reasoning to verify the calculation, but introduces numerical errors that lead to incorrect justification of the step’s correctness. In contrast, the RL-stage learns to leverage the computation tool with concise Python code, ensuring accurate calculations and thereby providing more reliable reward supervision on the policy model’s responses.
In addition, we randomly sample 500 trajectories from both stages of \our{} on the same set of inputs and observe a 26.3\% improvement in the tool-integration ratio after RL training, indicating our model learns to utilize tools better for step-level verification during RL rollouts.

\section{Limitations and Broader Impacts}
\label{app:limitaion}
While reinforcement learning with reward shaping enhances our PRM’s ability to capture fine-grained tabular reasoning signals, it introduces more computational overhead. Compared to SFT-only training, the RL stage requires additional rollouts, reward evaluations, and optimization steps, which can increase training cost and resource demands. This overhead may hinder reproducibility and accessibility in low-resource environments, motivating future work on more efficient reward objectives and lightweight reward modeling strategies. In addition, our current framework is limited to text–table reasoning, and extending it to multimodal settings (e.g., integrating charts or image-based tables) remains an important direction for future work.

From a broader perspective, this work highlights the potential for process reward models to enhance structured reasoning in domains such as fact-checking, scientific analysis, and decision support. At the same time, reliance on automated verification carries risks: if tools or training data contain errors, these may be amplified rather than corrected. We encourage future research to explore mechanisms for auditing verifier reliability, reducing the energy footprint of RL training, and ensuring equitable performance across diverse application domains.

\end{document}